\documentclass{article} 

\usepackage{iclr2025_conference,times}


\usepackage{amsmath,amsfonts,bm}









\def\eqref#1{equation~\ref{#1}}









\def\1{\bm{1}}










\DeclareMathAlphabet{\mathsfit}{\encodingdefault}{\sfdefault}{m}{sl}
\SetMathAlphabet{\mathsfit}{bold}{\encodingdefault}{\sfdefault}{bx}{n}













\usepackage[utf8]{inputenc} 
\usepackage{hyperref}
\usepackage{url}
\setlength{\marginparwidth}{2cm}
\usepackage{todonotes}
\usepackage{algpseudocode}
\usepackage{amsthm}
\newtheorem{theorem}{Theorem}
\newtheorem{lemma}{Lemma}
\usepackage{algorithm}
\usepackage{wrapfig}
\usepackage{multirow}
\usepackage{booktabs}
\usepackage{fancyhdr}
\usepackage{float} 
\title{Learning Task Belief Similarity with Latent Dynamics for Meta-Reinforcement Learning}


\author{Menglong Zhang, Fuyuan Qian \\
Southern University of Science and Technology\\
\texttt{\{zhangml2022, qianfy2023\}@mail.sustech.edu.cn} 
}

%

\iclrfinalcopy 
\begin{document}

\maketitle

\begin{abstract}
Meta-reinforcement learning requires utilizing prior task distribution information obtained during exploration to rapidly adapt to unknown tasks. The efficiency of an agent's exploration hinges on accurately identifying the current task. Recent Bayes-Adaptive Deep RL approaches often rely on reconstructing the environment's reward signal, which is challenging in sparse reward settings, leading to suboptimal exploitation. Inspired by bisimulation metrics, which robustly extracts behavioral similarity in continuous MDPs, we propose SimBelief—a novel meta-RL framework via measuring similarity of task belief in Bayes-Adaptive MDP (BAMDP). SimBelief effectively extracts common features of similar task distributions, enabling efficient task identification and exploration in sparse reward environments. We introduce latent task belief metric to learn the common structure of similar tasks and incorporate it into the specific task belief. By learning the latent dynamics across task distributions, we connect shared latent task belief features with specific task features, facilitating rapid task identification and adaptation. Our method outperforms state-of-the-art baselines on sparse reward MuJoCo and panda-gym tasks.
\end{abstract}

\section{Introduction}
\begin{wrapfigure}{r}{4cm}
\centering
\vspace{-10pt}
\includegraphics[width=4cm]{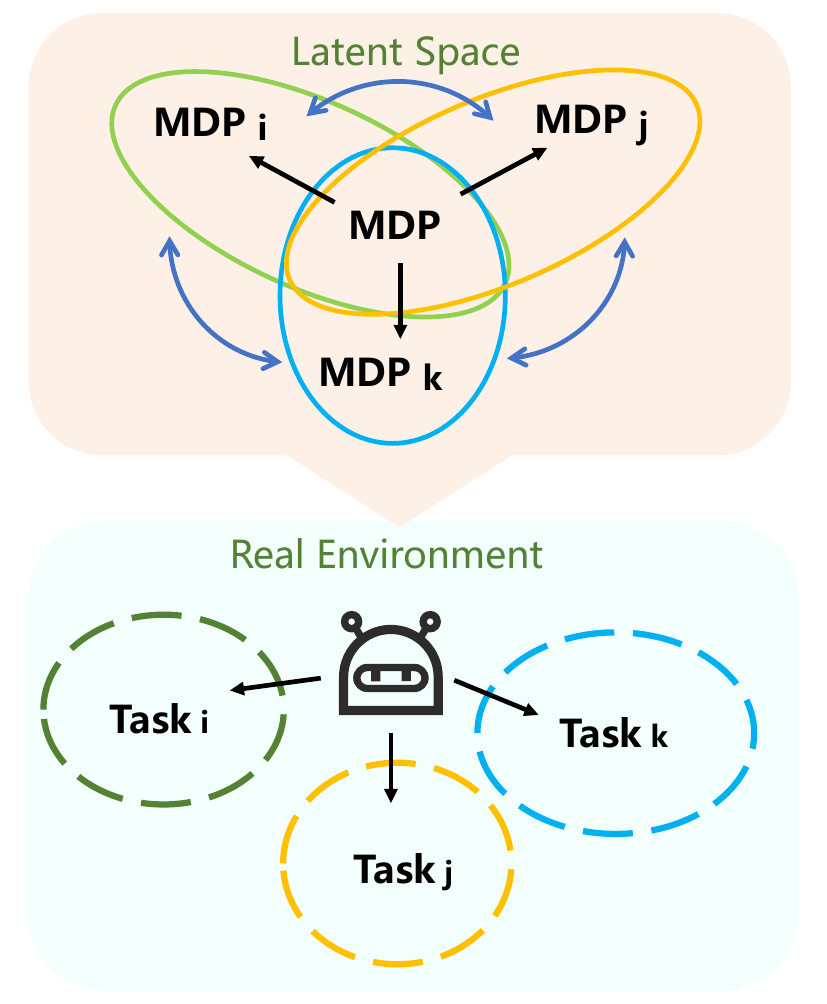}
\vspace{-20pt}
\caption{Learning shared structures among tasks in the latent space as task beliefs enables the agent to rapidly adapt to new tasks.}\label{fig1}
\vspace{-10pt}
\end{wrapfigure}
In meta-reinforcement learning, an agent is required to efficiently explore the environment to gather information relevant to the current task and use that information to adapt to new, unseen tasks \citep{duan2016rl, finn2017model,gupta2018meta,humplik2019meta}. However, in real-world scenarios, there are many distractions, and irrelevant information can cause the agent to engage in erroneous exploration behaviors, leading to the learning of a sub-optimal policy. This challenge is exacerbated when the exploration space is vast, and rewards are sparse. Humans, on the other hand, have the ability to generalize across similar tasks by quickly identifying common patterns, such as recognizing that both opening a window and a drawer involve a pulling action. These shared structures can be abstracted and applied to new tasks through certain transformations, enabling rapid adaptation by leveraging prior knowledge. This process involves identifying similar features and relationships between tasks, with the underlying \textit{task belief} facilitating effective knowledge transfer (Figure~\ref{fig1}).

 
Common approaches to task belief modeling include posterior sampling \citep{thompson1933likelihood,osband2013more,rakelly2019efficient} and Bayes-Adaptive Markov Decision Processes (BAMDPs) \citep{duff2002optimal,ghavamzadeh2015bayesian}, where BAMDPs provide a structured framework for learning and adapting to new tasks by integrating prior information \citep{zintgraf2019varibad}.
Previous works have addressed the issue of uninformative rewards in meta-RL by using intrinsic rewards \citep{zhang2021metacure} to extract task-relevant information or exploration bonuses \citep{zintgraf2021exploration} to enhance performance in hard exploration tasks. However, they often neglect the potential common structures shared across tasks.




To efficiently learn the structural relationships embedded within task belief and address the challenges of effective task identification and rapid adaptation, it is necessary to develop methods that can efficiently recognize tasks with similar structures. This requires exploring strategies beyond BAMDP-based approaches, which may involve leveraging hierarchical reinforcement learning \citep{frans2018meta}, or task embeddings \citep{liu2021decoupling,lan2019meta} that can capture the underlying structure of tasks and facilitate transfer learning across related tasks.  

In this work, we propose \textbf{SimBelief} (Learning \textbf{Sim}ilarity of Task \textbf{Belief} for Meta-RL) as a meta-RL framework under BAMDP 
to measure the difference of tasks in similar distributions by learning the dynamics and transferring capability in latent space. This results in a \textbf{task belief similarity},  which quantifies the representational relationships between any two random tasks in the learned latent space. However, directly reconstructing and exploring the environment in the latent space may lose crucial detailed information \citep{yarats2021improving,kemertas2021towards,hafner2019dream}, which is important for accurately reconstructing the specific task and ensuring convergence to the Bayesian optimal policy\citep{zintgraf2019varibad, choshen2023contrabar}.

Therefore, we combine the task belief similarity learned in the latent space, which provides an overall understanding of the task distribution and the relationships between tasks, with the belief of the specific task currently being learned, significantly enhancing the agent's adaptability to unknown environments.
Compared to related meta-RL methods, our approach effectively learns dynamics in the latent space and leverages the learned latent representations to quickly discern relationships between tasks. This enables the agent to utilize prior knowledge from tasks, recognize the current environment, and accelerate exploration and adaptation to unseen tasks.


Our main contributions are as follows:
\begin{itemize}
    \item We propose a task representation method for context-based meta-reinforcement learning algorithms in BAMDP, which enhances the agent's ability to recognize and adapt to unknown tasks by learning task belief similarity through latent reward model, transition model, and inverse dynamics model.
    \item In scenarios with sparse or uninformative reward signals, our algorithm more effectively extracts latent task space representations compared to other baselines and achieves state-of-the-art performance on  sparse reward MuJoCo and panda-gym tasks.
    \item Our algorithm demonstrates stronger adaptability and generalization capabilities to out-of-distribution (OOD) tasks.
    \item We theoretically validate the effectiveness of the \textit{latent task belief metric}  in Bayes-Adaptive Markov Decision Processes (BAMDPs).
    
\end{itemize}

\section{Background }

\subsection{Problem Formulation}

In meta-reinforcement learning (meta-RL), we consider a distribution \( p(\mathcal{T}) \) over a space of tasks \( \mathcal{T} \), where each task is modeled as a partially observable Markov decision process (POMDP; \citep{cassandra1994acting}) defined as \( \mathcal{T} = (\mathcal{S}, \mathcal{A}, \mathcal{O}, P, R, \rho_0, \gamma, H) \). Here, \( \mathcal{S} \) and \( \mathcal{A} \) represent the shared state and action spaces across all tasks, while \( \mathcal{O} \) denotes the task-specific observation space. The transition function \( P(s' \mid s, a) \), reward function \( R(r \mid s, a) \), and initial state distribution \( \rho_0 \) define the task-specific dynamics.

For context-based methods, the agent interacts with the environment over a horizon of \( H \) time steps in each episode of each task and utilizes a task-specific context \( \tau_t = (o_{1:t}, a_{1:t-1}, r_{1:t-1}) \) to infer the hidden task dynamics. The objective is to learn a policy \( \pi(a_t \mid \tau_t) \) that can rapidly adapt to new tasks by leveraging the learned task belief encoded in the context. The full horizon comprises \(N\) task episodes, referred to as a \textit{meta-episode}, each with its own set of \( T \) time steps, and the task context is updated as the agent gathers more observations. The objective in meta-RL is to maximize the expected return:
\begin{equation}
J(\pi) = \mathbb{E}_{\mathcal{T} \sim p(\mathcal{T})} \left[ \sum_{t=1}^{H} \mathbb{E}_{(o_t, a_t, r_t) \sim \pi} [ r_t ] \right].
\end{equation}

To address the multi-task adaptation problem in POMDPs, it is essential to avoid local optima and balance exploration with exploitation. Utilizing the framework of Bayes-Adaptive Markov Decision Processes (BAMDPs; \citep{duff2002optimal}), we can represent uncertainty over task-relevant information in a belief space. By leveraging the current belief, we can reconstruct the dynamics model to better handle uncertainty, enabling more effective adaptation to the current task. The BAMDP is defined as \( M^+ = (\mathcal{S}^+, \mathcal{A}, \rho_0^+, P^+, R^+, H^+) \), where the hyper-state space \( \mathcal{S}^+ = \mathcal{S} \times \mathcal{B} \) combines the state space \( \mathcal{S} \) with the task belief space \( \mathcal{B} \), representing the posterior over MDPs based on past experiences. \( P^+ \) is the transition function, \( \rho_0^+ \) is the initial hyper-state distribution, and \( R^+ \) is the reward function. The total decision horizon is \( H^+ = N \times H \). The goal is to learn a meta-policy \( \pi^+(a_t \mid s_t^+) \) that maximizes cumulative reward by balancing exploration of task uncertainty and exploitation of the current belief.


\subsection{Bisimulation metrics in RL}

Bisimulation metrics provide a way to measure state similarity in reinforcement learning (RL) based on behavioral equivalence, focusing on transitions and rewards rather than raw state features. The concept originated in model checking \citep{givan2003equivalence}. In RL, bisimulation aggregates states that lead to similar future outcomes. This allows for more efficient learning. Approximate bisimulation metrics \citep{ferns2004metrics,castro2020scalable} extended this idea to high-dimensional environments, allowing RL agents to better generalize, reduce state space complexity, and improve exploration. These metrics are particularly useful in tasks with sparse rewards or partial observability \citep{guo2022learning}.

\newtheorem{definition}{Definition}
\begin{definition}[$\pi$-Bisimulation Metric \citep{castro2020scalable}]
\label{def1}
Given two states \( s_i \) and \( s_j \) from the state space \( S \), the distance between these states can be evaluated using the $\pi$-bisimulation metric \( d_\pi(s_i, s_j) \), which measures behavioral similarity. We define the metric \( d_\pi(s_i, s_j) \) as:
\begin{align}
d_\pi(s_i, s_j) = |R^\pi_{s_i} - R^\pi_{s_j}| + \gamma W_1(d_\pi)(P^\pi_{s_i}, P^\pi_{s_j})
\end{align}
where \( R^\pi_{s_i} \) and \( R^\pi_{s_j} \) represent the reward functions under policy $\pi$, \( P^\pi \) denotes the transition distributions, and \( W_1 \) is the Wasserstein distance between the distributions.
\end{definition}

\subsection{Uninformative Rewards in Meta-Reinforcement Learning}

Uninformative rewards provide insufficient feedback for agents to quickly learn effective strategies \citep{andrychowicz2017hindsight, dulac2019challenges}. This issue can impede exploration, slow convergence, and lead to suboptimal policies. To address this challenge, methods such as intrinsic motivation and curiosity-driven exploration have been employed to encourage agents to explore state spaces even when extrinsic rewards are minimal \citep{pathak2017curiosity, burda2018exploration}. Reward shaping techniques modify the reward function to offer more informative feedback, thereby accelerating learning \citep{ng1999policy}. Balancing the exploration-exploitation trade-off is also crucial for efficient learning \citep{sutton2018reinforcement}. Enhancing meta-RL agents' ability to learn from uninformative rewards can significantly improve their performance across complex, real-world environments by incorporating advanced exploration strategies and reward augmentation methods \citep{gupta2018meta}.


In this work, we learn the dynamics of multiple tasks in a latent space and use a task belief metric to measure the differences between similar tasks. By extracting similar structures within the latent space and representing these structures as task belief similarity, we map them to the current real and unknown environment to assist the agent in exploration. This approach aims to achieve rapid adaptation to multiple tasks in environments with uninformative rewards.

\section{Method}

In this section, we provide a detailed introduction to SimBelief (Figure \ref{fig2}), an effective off-policy meta-RL approach for online adaptation within the BAMDP framework. First, we propose a latent task belief metric to quantify the relationships between different tasks in the latent space in section \ref{metric}. Then, we explain how task belief similarity is learned within the learned latent dynamics in section \ref{3.2}. We also describe the overall algorithmic process of SimBelief in detail in section \ref{3.3}. Finally, we theoretically prove the conditions under which the policy can transfer between tasks in the latent space in section \ref{3.4}.


\subsection{latent task belief metric}
\label{metric}

Previous work \citep{zhang2learning} learns an environment encoder to capture the relationships between Block MDPs \citep{du2019provably} for multi-task generalization. In \citep{zhang2021metacure}, information gain is used as an intrinsic reward for exploration. However, in these approaches, the agent needs to know the current task information (i.e., task ID) during training. In contrast, in context-based meta-RL, these approaches limit adaptability to out-of-distribution tasks, as the agent must infer the task being executed from historical information and reason about the current task distribution during adaptation. Including task ID during training can partially constrain the agent's reasoning capabilities. In real-world scenarios, relying solely on the environment's inherent reward signal or manually designed intrinsic rewards often fails to effectively capture the common structure of tasks \citep{zheng2020can}.
Therefore, learning the structural similarities between tasks and leveraging belief to facilitate rapid transfer across similar tasks is more advantageous in complex environments, especially those with sparse reward signals. To measure the relationships between tasks and make it applicable within the BAMDP, we define the \textit{latent task belief metric}.

\begin{definition}[Latent Task Belief Metric]
\label{def2}
Given two latent task beliefs \( b_i \) and \( b_j \), with corresponding samples \( z_i \) and \( z_j \) identified by their latent representations, let \( d_\pi \) denote the metric that evaluates the distance between these two task beliefs. We define \( d_\pi(z_i, z_j) \) as follows:
\begin{align}
d_{\pi}(z_i, z_j) = & \left| R_i^\pi(s_i^{+},a_i) - R_j^\pi(s_j^{+},a_j) \right| 
+ W_2(d_{\pi}) \left( T_i^\pi(s_i^{+},a_i), T_j^\pi(s_j^{+},a_j) \right) \nonumber \\
& + \left\| I_i^\pi(s_i^{+}, {s'}_{i}^{+}) - I_j^\pi(s_j^{+}, {s'}_{j}^{+}) \right\|_1 
\end{align}

where \( s_i^{+} = (g(s_i), z_i) \) is the augmented state corresponding to the task \( i \), \( r_i = R_{i}^{\pi}(s_i^{+}, a_i) \) is the reward model, \({s'}_i^{+} = T_{i}^{\pi}(s_i^{+}, a_i)\)\footnote{$s'$ represents the state at the next time step.} is the transition model, and \( a_i = I_{i}^{\pi}(s_i^{+}, {s'}_i^{+}) \) is the inverse dynamics model.

\end{definition}

Here, we map \( s \) to a latent compressed space as \( g(s) \), and learn the task dynamics in the compressed space. Latent dynamic space can be defined as $\mathcal{S}^{+} = \mathcal{G}(\mathcal{S}) \times \mathcal{Z}$. In Figure \ref{fig2}, \( z_l \) represents the sample from latent task belief \( b_l \) in the latent space and captures the similarity between different tasks.\footnote{For simplicity, in the latent task belief metric, \(z_i\), \(z_j\) and \(z^i_l\), \(z^j_l\) are used interchangeably to represent the same meaning.} All task distributions share a common latent dynamic space, with different tasks distinguished by different \( z_l \), where \( z_l \) contains the dynamic similarity information between any two tasks. We employ the inverse dynamics model to predict \(a\), as it not only helps retain information relevant to what the agent can control but also enhances the agent's reasoning capability (see Appendix \ref{invab}).
Unlike in Definition \ref{def1}, here we use the dynamic model to measure task belief similarity to enhance exploration efficiency in unknown environments, while preserving the agent's control information in sparse reward settings. This approach is more effective for identifying similar tasks.

\begin{figure}[t]
\vspace{-13pt}
\begin{center}
\includegraphics[width=4.0in]{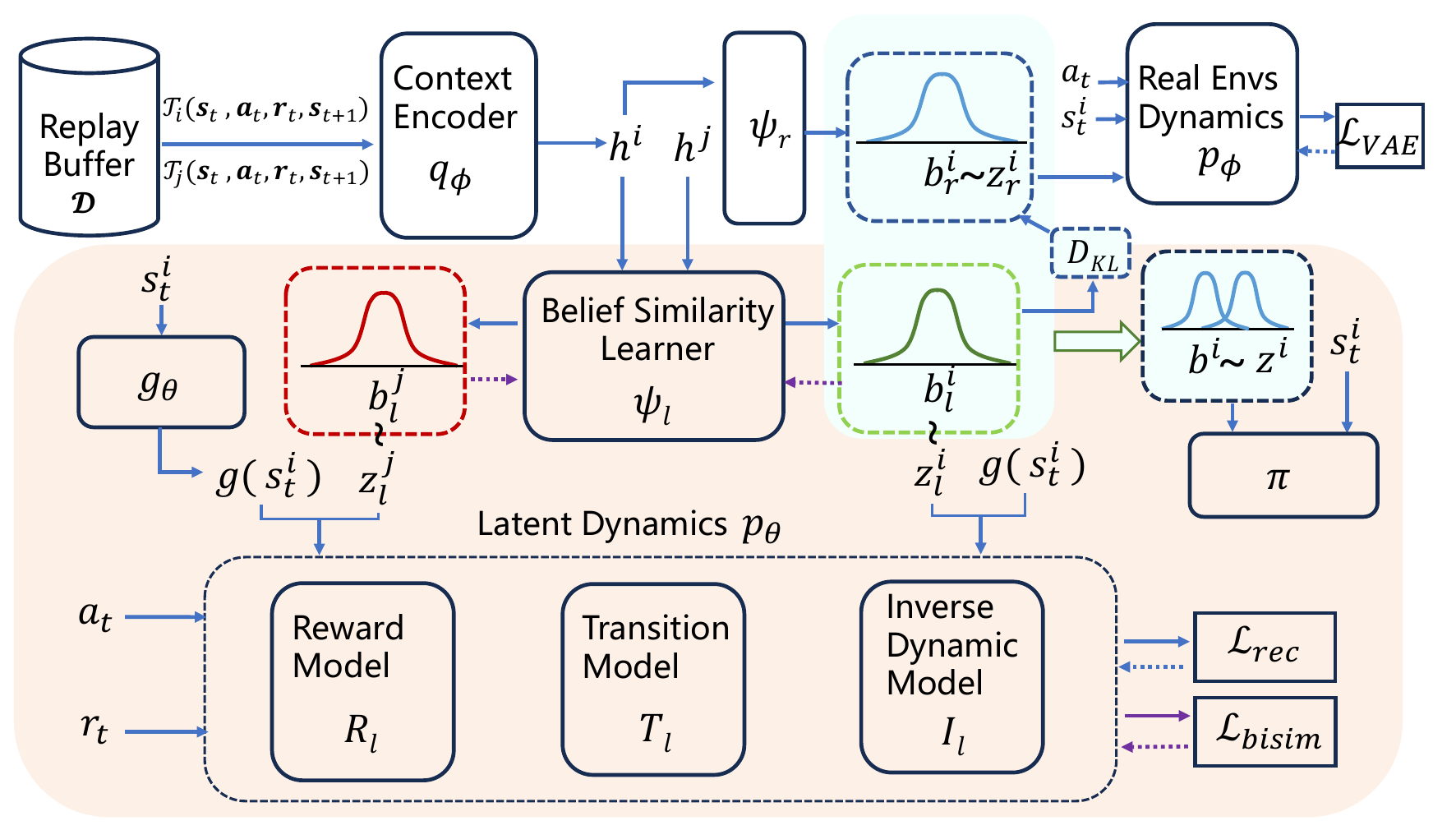}
\end{center} 
\vspace{-10pt}
\caption{SimBelief architecture. Our framework consists of three components: learning latent task belief similarity, reconstructing the specific task, and the policy. In the compressed state space, \( \psi_l \) learns the \textit{latent task belief} \( b_l^i \) through latent dynamics shared among all tasks in the distribution. In the real environment, we reconstruct the dynamics of the specific task using \( \psi_r \) to obtain the \textit{specific task belief} \( b_r^i \). The policy \( \pi \) is conditioned on the state and the combined belief \( b^i \), which integrates \( b_r^i \) and \( b_l^i \). In this paper, we use \( l \) and \( r \) to denote the latent space and real space, respectively, and \( i \) and \( j \) to represent task \( i \) and task \( j \), respectively.}\label{fig2}
\vspace{-10pt}
\end{figure}

\subsection{Learning Task Belief Similarity in Latent Space}
\label{3.2}
Our goal is to use the latent task belief metric to capture the similarity information between any two tasks, which is embedded within the task belief. This enables SimBelief to achieve rapid task reasoning and adaptation. From the replay buffer \(\mathcal{D}\), we randomly sample trajectories \(\tau\) from two tasks (i.e., task \(i\) and task \(j\)) in the task set \(M\). The trajectory is then fed into the context encoder \(q_\phi\) to obtain historical information \(h^i\) and \(h^j\), which serve as prior knowledge for the tasks and are input into the latent space. The latent space primarily consists of three components: (1) a state encoder \( g_\theta \) that maps the original state space to the latent state space \( \mathcal{S}^{+} \), (2) a belief similarity learner \( \psi_l \) that maps \( h \) to the latent task belief \(b_l\), and (3) a latent dynamics model \( p_\theta \) conditioned on the learned latent task belief (Figure \ref{fig2}). 
To obtain the latent dynamic representation of all tasks conditioned on \( z_l \), we reconstruct the latent dynamics \( p_\theta((s'^{+}, r, a) \mid z_l) \), which includes the reward model \( R_l \), the transition model \( T_l \), and the inverse dynamics model \( I_l \) through one-step predictions. The learning objective for the latent dynamics can be expressed as:
\begin{align}
\label{eq4}
    \mathbb{E}_{p(M)} \left[ \mathbb{E}_{\psi_l(z_l \mid h )} \left[ \log p_\theta( (s'^{+}, r, a) \mid z_l) \right] \right].
\end{align}
This objective enables the model to capture task-specific latent dynamics by maximizing the likelihood of predicting the next augmented state \( s'^{+} \), reward \( r \), and action \( a \) given the latent task belief \( z_l \).

We aim to learn the similarity relationship between any two tasks using the latent task belief metric and the latent dynamics learned through Equation \ref{eq4}. This similarity relationship is obtained by optimizing the belief similarity learner \( \psi_l \). The learning objective for task belief similarity using \( \psi_l \) is defined as follows:
\begin{align}
\label{eq5}
\mathcal{L}_{bisim}(\psi_l) = & \| \psi_l(h_i)-\psi_l(h_j) \|_1 - \left| \hat{R}(s_i^{+},a_i) - \hat{R}(s_j^{+},a_j) \right| - W_2 \left( \hat{T}(\cdot \mid s^{+}_i, a_i), \hat{T}(\cdot \mid s^{+}_j, a_j) \right)  \nonumber \\ & - \left\| \hat{I}(\cdot \mid s^{+}_i, s'^{+}_i) - \hat{I}(\cdot \mid s^{+}_j, s'^{+}_{j}) \right\|_1,
\end{align}
where the gradients of dynamics model are stopped, and $b_l^i=\psi_l(h_i)$. We use the 2-Wasserstein metric because the \( W_2 \) metric has a convenient closed-form solution \citep{zhanglearning}. During training, through Equation \ref{eq4} and \ref{eq5}, \( z^i_l \), sampled from \( b^i_l \), is optimized to encode both the latent dynamic information required to reconstruct task \( i \) and the similarity information between task \( i \) and any other task in the task set \( M \). Through this approach, we can utilize \( \psi_l \) to identify contextual information at the current timestep and represent the similarity between any tasks through its output \( b_l \).  



\subsection{SimBelief Algorithm}
\label{3.3}
We now proceed to describe how the learned task belief similarity (Section \ref{3.2}) can be leveraged to enhance the exploration capability and convergence efficiency of meta-RL. The algorithm pseudocode can be found in Appendix \ref{appc}.

%
\textbf{Reconstructing the specific task.} In the BAMDP framework \citep{ghavamzadeh2015bayesian,zintgraf2019varibad}, even with extremely sparse rewards, it is necessary to construct the specific task's reward model in the original state space using a variational auto-encoder (VAE, \citep{kingma2013auto})
. This is because reconstructing the specific task in the latent state space may result in the loss of crucial information, as the latent space is solely used for learning task belief similarity. Additionally, along the temporal dimension, we only reconstruct the past history, as we use a one-step prediction approach in the latent space. This approach effectively captures local information about the task distribution, enhancing the agent's reasoning ability for future predictions (see the upper part of Figure \ref{fig2}). The objective is to maximise
\begin{align}
\label{eq6}
  \mathcal{L}_{\text{VAE}}(\phi) = \mathbb{E}_{q_\phi(z_r \mid \tau_{:t})} \left[ \log p_\phi(\tau_{:t} \mid z_r) \right] - D_{\text{KL}}\left( q_\phi(z_r \mid \tau_{:t}) \parallel p(z_r) \right),  
\end{align}
where \( q_\phi \) generates the specific task belief \( z_r \) for the current timestep using only the information from the historical trajectory \( \tau_{:t} \). We consider \( q_\phi \) as the forward reasoning process to infer the task belief \( z_r \), corresponding to \( z_r \sim \psi_r(b_r \mid h) q_\phi(h \mid \tau_{:t}) \) in the Algorithm \ref{alg:algorithm}.


\textbf{Integrating latent task belief for effective online exploration. }
The learned latent task belief \( z_l \sim b_l \) enables the agent to quickly recall the similarities between different tasks during interactions with the environment, enhancing the efficiency of reasoning and exploration. This is the primary reason why SimBelief achieves robust adaptation to OOD tasks (Figure \ref{fig4}). Since the latent state space is an abstraction of the original task, the latent task belief may overlook the specific task information required by the agent during exploration in the real environment. Therefore, we choose to project the two distributions into a joint space \( \mathcal{B} \), and then use a Gaussian mixture of the two distributions $b_r=(\mu_{\psi_r}(h), \sigma_{\psi_r}(h))$  and $b_l=(\mu_{\psi_l}(h), \sigma_{\psi_l}(h))$, combined with the real environment's augmented state derived from \( s_t \), as the input to the policy \( \pi \). The formulation of the Gaussian mixture distribution \( b \) is given by:
\begin{align}
   b = w_r \cdot \mathcal{N}(z_r \mid \mu_r, \sigma_r^2) + w_l \cdot \mathcal{N}(z_l \mid \mu_l, \sigma_l^2), 
\end{align}
where \( \mathcal{N}(z \mid \mu_r, \sigma_r^2) \) and \( \mathcal{N}(z \mid \mu_l, \sigma_l^2) \) are Gaussian distributions with means \( \mu_r \) and \( \mu_l \), and variances \( \sigma_r^2 \) and \( \sigma_l^2 \), respectively. The weights \( w_r \) and \( w_l \) satisfy \( w_r + w_l = 1 \) and are non-negative (see Appendix \ref{appw}).

In the early stages of training, the latent task belief enhances the agent's ability to identify tasks and explore efficiently in the real environment, providing a high-level understanding of the overall task distribution (see Appendix \ref{appcore}). However, for the algorithm to converge stably, it need to incorporate finer-grained information about specific tasks, which is captured in the specific task belief \( b_r \). Therefore, we minimize the discrepancy between the latent belief \( b_l \) and the specific task belief \( b_r \) when training \( \psi_l \). This is achieved by introducing a KL divergence term between the two beliefs, serving as a regularization for the latent task belief. Combined with Equation \ref{eq5}, the overall optimization objective of \( \psi_l \) can be written as:
\begin{align}
\label{eq8}
\mathcal{J}(\psi_l) =\mathbb{E}_{p(M)}\left[ \mathcal{L}_{bisim}(\psi_l)+ D_{\text{KL}}\left( \hat{q}_\phi(z_r \mid \tau_{:t}) \parallel p(z_l) \right) \right], 
\end{align}
where the gradient of \(\hat{q}_\phi\) is stopped, and \( p(z_l) \) represents the distribution of \( z_l \sim \psi_l(b_l \mid h)  \).

SimBelief utilizes Soft Actor-Critic (SAC) \citep{haarnoja2018soft} to optimize the policy, where the optimization of latent dynamics and latent task similarity is conducted jointly with policy optimization to ensure convergence to near Bayes-optimal policy. To enhance the consistency between latent dynamics and real environment exploration, without introducing perturbations that could disrupt belief similarity in the latent space, we use the loss of Q-function as the objective to train the offset $(\Delta \mu,\Delta \sigma)$ of $b_l=(\mu_{\psi_l}(h)+ \Delta \mu, \sigma_{\psi_l}(h)+ \Delta \sigma)$ (see Appendix \ref{offsetablation}). 
The loss of SAC after integrating latent dynamic information is
\begin{align}
\label{eq9}
    \mathcal{J}(\pi) = \mathbb{E}_{(s,a) \sim \mathcal{D}} \left[ \alpha \log \left( \pi (a \mid (s,b)) \right) - Q_\theta ((s,b) , a) \right],
\end{align}
where \( (s, b) \) is the augmented state in the real environment under the BAMDP framework.


\subsection{Theoretical Analysis}
\label{3.4}
Based on the latent task belief metric \( d_\pi(z_i, z_j) \) defined in Definition \ref{def2}, we can establish the following theorems regarding the difference in task dynamics between two tasks in the latent space within BAMDP.

\begin{theorem}[Value difference bound]
\label{th1}
Given two tasks \( \mathcal{M}_{i} \) and \( \mathcal{M}_{j} \) in the latent space with states \( s_i^+, s_j^+ \in S^{+} \), and let \( V^\pi \) be the value function of policy \( \pi \), the value difference bound between the tasks can be given by:
\begin{align}
    |V^\pi(s_i^+) - V^\pi(s_j^+)| \leq d_\pi(z_i, z_j),
\end{align}
where \( d_\pi(z_i, z_j) \) is the latent task belief metric.
\end{theorem}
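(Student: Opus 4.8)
The plan is to mirror the classical bisimulation value-difference bound, adapted to the latent BAMDP metric of Definition \ref{def2}. I would work from the Bellman fixed-point characterization of the value function in the latent space, $V^\pi(s^+) = R^\pi(s^+,a) + \gamma\,\mathbb{E}_{s'^+ \sim T^\pi(\cdot\mid s^+,a)}[V^\pi(s'^+)]$, and prove the statement by establishing that $V^\pi$ is $1$-Lipschitz with respect to $d_\pi$. Concretely, I would introduce the value-iteration sequence $V_0^\pi \equiv 0$ and $V_{n+1}^\pi(s^+) = R^\pi + \gamma\,\mathbb{E}_{T^\pi}[V_n^\pi]$, and show by induction on $n$ that $|V_n^\pi(s_i^+) - V_n^\pi(s_j^+)| \le d_\pi(z_i,z_j)$ for every pair of tasks. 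Passing to the limit using $V_n^\pi \to V^\pi$ (standard for $\gamma < 1$ with bounded rewards) then yields the theorem; the base case is immediate since $V_0^\pi \equiv 0$ is trivially $0$-Lipschitz.

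For the inductive step I would bound the reward and transition contributions separately. The triangle inequality gives $|V_{n+1}^\pi(s_i^+) - V_{n+1}^\pi(s_j^+)| \le |R_i^\pi - R_j^\pi| + \gamma\,\bigl|\mathbb{E}_{T_i^\pi}[V_n^\pi] - \mathbb{E}_{T_j^\pi}[V_n^\pi]\bigr|$, where the first term matches the reward term of $d_\pi$ exactly. For the second, the induction hypothesis states that $V_n^\pi$ is $1$-Lipschitz with respect to $d_\pi$, so the Kantorovich--Rubinstein duality gives $\bigl|\mathbb{E}_{T_i^\pi}[V_n^\pi] - \mathbb{E}_{T_j^\pi}[V_n^\pi]\bigr| \le W_1(d_\pi)(T_i^\pi, T_j^\pi)$.

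The key reconciliation is that Definition \ref{def2} uses $W_2$ rather than $W_1$ and carries no explicit $\gamma$ on the transport term. I would close this with the elementary inequality $W_1 \le W_2$ (Jensen/H\"older, monotonicity of $W_p$ in $p$) together with $\gamma \le 1$, so that $\gamma\,W_1(d_\pi)(T_i^\pi,T_j^\pi) \le W_2(d_\pi)(T_i^\pi,T_j^\pi)$; thus the discounted transport term is dominated by the undiscounted $W_2$ term appearing in $d_\pi$. The inverse-dynamics term $\|I_i^\pi - I_j^\pi\|_1$ is nonnegative and only enlarges the right-hand side, so it is pure slack and plays no role in the bound. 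Combining the three pieces yields $|V_{n+1}^\pi(s_i^+) - V_{n+1}^\pi(s_j^+)| \le |R_i^\pi - R_j^\pi| + W_2(d_\pi)(T_i^\pi,T_j^\pi) + \|I_i^\pi - I_j^\pi\|_1 = d_\pi(z_i,z_j)$, closing the induction.

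The main obstacle I anticipate is making the Lipschitz/Kantorovich step rigorous in the augmented latent space. The duality requires $V_n^\pi$ to be $1$-Lipschitz with respect to the very metric $d_\pi$ out of which the transport cost is built, yet $d_\pi$ is written as a function of the belief samples $z$ while the transitions $T^\pi$ are distributions over augmented next states $s'^+ = (g(s'),z')$. I would therefore need to fix the identification of each augmented state with its latent belief and argue that $d_\pi$ is a well-defined (pseudo)metric on $\mathcal{S}^+$ --- ideally as the fixed point of the operator induced by the right-hand side of Definition \ref{def2} --- so that the Kantorovich duality and the inductive hypothesis refer to the same object. Verifying that this operator is well-posed (and that the undiscounted $W_2$ term still admits such a fixed point) is the delicate part; the remaining estimates are routine.
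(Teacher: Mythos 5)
Your proposal follows essentially the same route as the paper's proof: induction on the value-iteration sequence with $V_0^\pi \equiv 0$, a triangle-inequality split into reward and transition contributions, and a Wasserstein bound on the transition term, with the inverse-dynamics term entering only as nonnegative slack. If anything you are more careful than the paper, which at the corresponding step simply invokes ``the Lipschitz property of the value function'' and states that it disregards the effect of the discount factor $\gamma$, whereas you close that gap explicitly via $\gamma\,W_1 \le W_2$ and you correctly flag the fixed-point well-posedness of $d_\pi$ that the paper establishes separately in its Lemma~1.
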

This theorem demonstrates that tasks with similar latent task beliefs \( z_i \) and \( z_j \) will have similar value function under the same policy \( \pi \).

\begin{theorem}[Latent transfer bound]
\label{th2}
    Let \( Q_{\mathcal{M}_j}^* \) be the optimal Q-function for task \( \mathcal{M}_j \). The difference between \( Q_{\mathcal{M}_j}^* \) and the Q-function of the policy \( \pi \) learned from task \( \mathcal{M}_i \), applied to task \( \mathcal{M}_j \), is bounded as follows:
    \begin{align}
    \left\| Q_{\mathcal{M}_j}^* - [Q_{\mathcal{M}_i}^\pi]_{\mathcal{M}_j} \right\|_\infty \leq \epsilon_R + \gamma \left( \epsilon_T + \epsilon_I + \| z_i - z_j \|_1 \right) \frac{R_{\text{max}}}{2(1 - \gamma)}.
\end{align}
\end{theorem}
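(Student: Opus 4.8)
\textbf{Proof proposal for Theorem \ref{th2} (Latent transfer bound).}

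The plan is to bound the error incurred when transferring the policy $\pi$ (optimal for task $\mathcal{M}_i$) to task $\mathcal{M}_j$, by comparing $Q$-functions through the Bellman operators of the two tasks. First I would set up the two task-specific Bellman optimality operators $\mathcal{B}_j$ and $\mathcal{B}_i$ acting on $Q$-functions over the shared augmented state space $\mathcal{S}^+$, and recall that $Q_{\mathcal{M}_j}^*$ is the unique fixed point of $\mathcal{B}_j$ while $[Q_{\mathcal{M}_i}^\pi]_{\mathcal{M}_j}$ denotes evaluating the $\mathcal{M}_i$-policy on $\mathcal{M}_j$. The standard route is to insert $[Q_{\mathcal{M}_i}^*]_{\mathcal{M}_j}$ as an intermediate term and apply the triangle inequality, then exploit the $\gamma$-contraction of the Bellman operator in the sup-norm to turn a one-step discrepancy into the geometric factor $\frac{1}{1-\gamma}$.

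The key quantitative step is to bound the one-step mismatch between the two tasks' operators. This is where the three components of the latent task belief metric $d_\pi(z_i,z_j)$ from Definition \ref{def2} enter. I would introduce the model-error quantities $\epsilon_R$, $\epsilon_T$, $\epsilon_I$ as uniform bounds on the discrepancies between the reward models, transition models (measured in $W_2$), and inverse dynamics models of the two tasks, so that for any fixed $Q$ with $\|Q\|_\infty \le \frac{R_{\max}}{1-\gamma}$ one has
\begin{equation}
\label{eq:onestep}
\left\| \mathcal{B}_j Q - \mathcal{B}_i Q \right\|_\infty \le \epsilon_R + \gamma\left( \epsilon_T + \epsilon_I + \| z_i - z_j \|_1 \right)\frac{R_{\max}}{2(1-\gamma)} \cdot (1-\gamma).
\end{equation}
The reward-model gap contributes $\epsilon_R$ directly; the transition-model gap is converted from a $W_2$ distance between next-state distributions into a value gap via a Lipschitz/duality argument on $Q$, producing the $\gamma$-weighted $\epsilon_T$ term; and the $\epsilon_I$ and $\| z_i - z_j \|_1$ terms capture, respectively, the mismatch in the inverse-dynamics-induced action selection and the residual latent-belief separation that controls how far the transferred policy can drift. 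The coefficient $\frac{R_{\max}}{2(1-\gamma)}$ arises as the uniform bound on the span of admissible value functions.

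Combining the intermediate decomposition with the contraction, the geometric series $\sum_{k\ge 0}\gamma^k = \frac{1}{1-\gamma}$ cancels the $(1-\gamma)$ factor and yields the stated bound. The main obstacle I anticipate is the transition term: passing from the $W_2$ distance in Definition \ref{def2} to a bound on $\left| \mathbb{E}_{s'\sim T_i}[V(s')] - \mathbb{E}_{s'\sim T_j}[V(s')] \right|$ requires the value function to be Lipschitz in the latent state (so that the $W_2$ coupling can be exploited via Kantorovich–Rubinstein duality, or via a direct coupling bound), and establishing this Lipschitz regularity self-consistently with Theorem \ref{th1} is the delicate part. I would handle it by invoking the value-difference bound of Theorem \ref{th1}, which already certifies that $V^\pi$ varies continuously with the latent belief, and then carefully tracking how the inverse-dynamics discrepancy couples with the policy transfer to produce the $\epsilon_I + \|z_i - z_j\|_1$ contribution inside the $\gamma$-weighted term.
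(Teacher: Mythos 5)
Your proposal follows essentially the same route as the paper's proof: a simulation-lemma decomposition via the one-step Bellman-operator mismatch, the $\gamma$-contraction giving the $\tfrac{1}{1-\gamma}$ prefactor, triangle-inequality splitting of the reward/transition/inverse-dynamics discrepancies into the $\epsilon_R,\epsilon_T,\epsilon_I$ terms plus the residual $\|z_i-z_j\|_1$, and the $\tfrac{R_{\max}}{2(1-\gamma)}$ value-span (centering) bound for the transition term. The delicate points you flag (converting $W_2$ into a value gap via Lipschitz regularity of $V$) are exactly the ones the paper leans on, so no substantive divergence.
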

This extended transfer bound theorem provides a formal bound on how well a policy \( \pi \) trained on task \( \mathcal{M}_i \) will transfer to task \( \mathcal{M}_j \). \( \epsilon_R \),  \( \epsilon_T \), and \( \epsilon_I \) are approximation errors for rewards, transitions, and inverse dynamics. All proofs are provided in Appendix \ref{apdB}.

\section{Experiments}



In this section, we evaluate SimBelief on sparse-reward tasks in MuJoCo \citep{finn2017model,rakelly2019efficient} and the more challenging panda-gym \citep{gallouedec2021panda} environment. We aim to address the following questions: 1. Can SimBelief achieve fast online adaptation in sparse reward tasks? 2. Can SimBelief leverage learned latent belief similarity representations to enhance out-of-distribution generalization? 3. What is the impact of latent task representations on rapid exploration? 4. How does the latent space correspond to the real environment?

\textbf{Environments and baselines:} 
We conducted experiments on six complex sparse reward tasks, including Point-Robot-Sparse, Cheetah-Vel-Sparse, Walker-Rand-Params, Panda-Reach, Panda-Push, and Panda-Pick-And-Place (see Appendix \ref{appenv}). While MuJoCo tasks are commonly used by current meta-learning algorithms, panda-gym simulates real-world robotic arm movements with extremely sparse rewards, making it a better benchmark for evaluating an algorithm’s ability to handle real-world tasks. We compared SimBelief against PEARL \citep{rakelly2019efficient}, which is based on posterior sampling, and MetaCURE \citep{zhang2021metacure}, which learns a separate policy, as well as VariBAD \citep{zintgraf2019varibad}, HyperX \citep{zintgraf2021exploration}, and $\text{RL}^2$ \citep{duan2016rl}, which are based on the BAMDP framework (see Appendix \ref{appbaseline}). 

\begin{figure}[t]
\begin{center}
\includegraphics[width=5.1in]{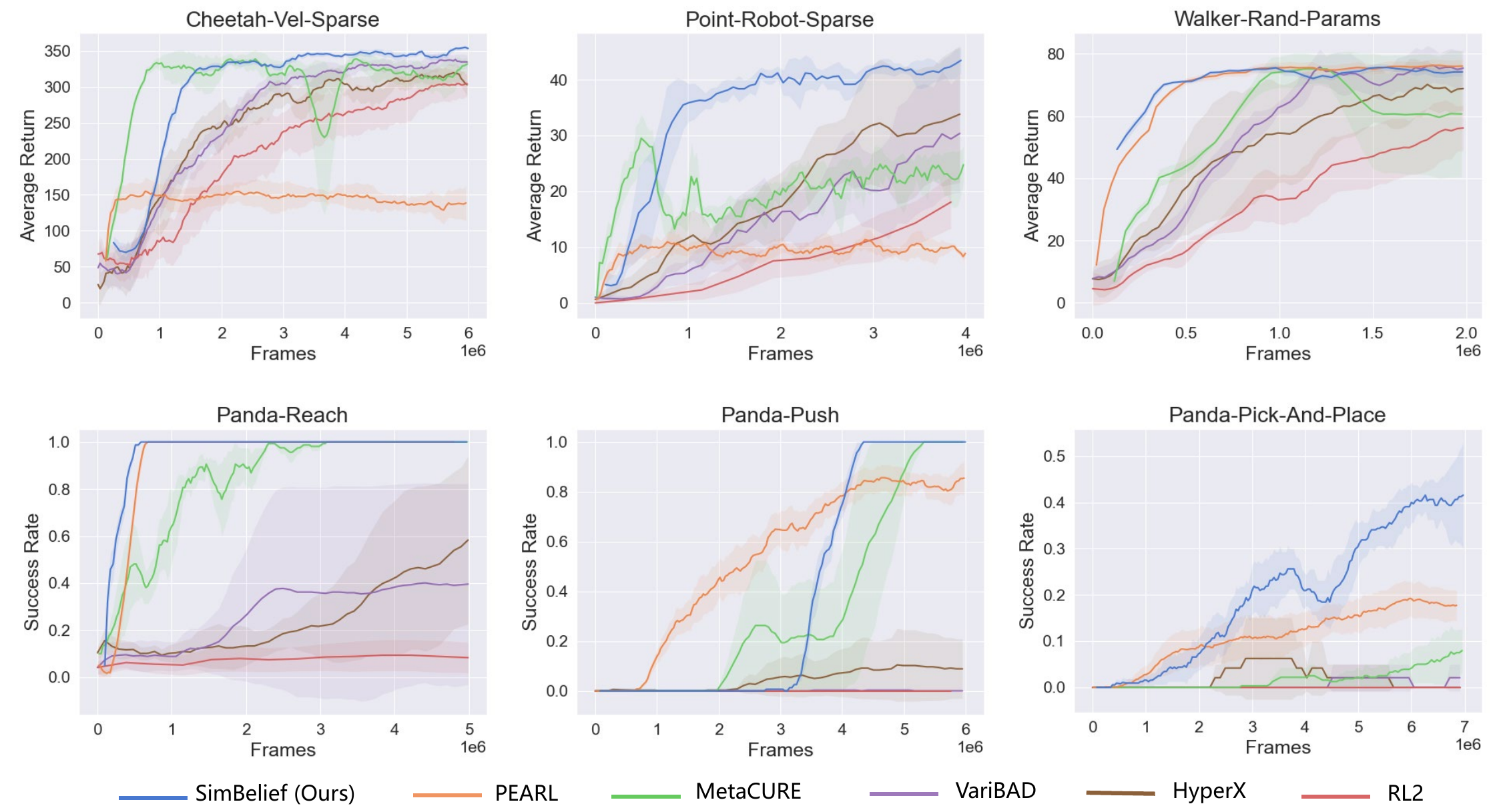}
\end{center} 
\vspace{-12pt}
\caption{Meta-testing performance on sparse reward MuJoCo and panda-gym tasks over 3 random seeds. Our algorithm, SimBelief, demonstrates superior online adaptation capabilities compared to other algorithms.}\label{fig3}
\vspace{-13pt}
\end{figure}

\textbf{Online Adaptation Performance.}
During the training phase, we performed meta-testing by calculating the meta-episode average return and success rate across different tasks to evaluate the algorithm’s online performance. As shown in Figure \ref{fig3}, SimBelief consistently performed well across all tasks and exhibited superior adaptation capabilities compared to other algorithms. BAMDP-based algorithms often require more interactions with the environment to converge to a Bayes-optimal policy, whereas our algorithm, by integrating task belief similarity in the latent space, significantly improved the agent's ability to effectively extract task-relevant information in sparse reward environments, thus accelerating convergence. In sparse reward tasks like those in panda-gym, the performance of Varibad, HyperX, and $\text{RL}^2$ illustrates that overly relying on the environment's reward signal while neglecting task similarity information makes it difficult to achieve online adaptation in more complex scenarios. Additionally, SimBelief demonstrated more stable performance compared to posterior sampling-based algorithms like PEARL and MetaCURE, indicating that SimBelief accurately learned the common structure of the task distribution.


\textbf{Out-of-distribution Task Inference and Adaptation.} We evaluated the algorithm's reasoning and generalization capabilities on out-of-distribution (OOD) tasks. In Figure \ref{fig4}, we show the agent’s out-of-domain adaptation and generalization performance on the Cheetah-Vel-Sparse (top) and Point-Robot-Sparse (bottom) tasks. The leftmost column represents the task range used during the training phase, where the agent was trained by generating task goals within this range. During the testing phase, we adjusted the velocity range for Cheetah-Vel-Sparse and the semi-circle radius for Point-Robot-Sparse to assess the agent’s robustness and task inference capabilities within five episodes. The results indicate that SimBelief exhibits stable out-of-distribution generalization, demonstrating that learned latent belief similarity representations can be leveraged to enhance out-of-distribution task inference ability. The latent task belief similarity we proposed also equips the agent with strong task transferability. 

\begin{figure}[t]
\vspace{-13pt}
\begin{center}
\includegraphics[width=5.1in]{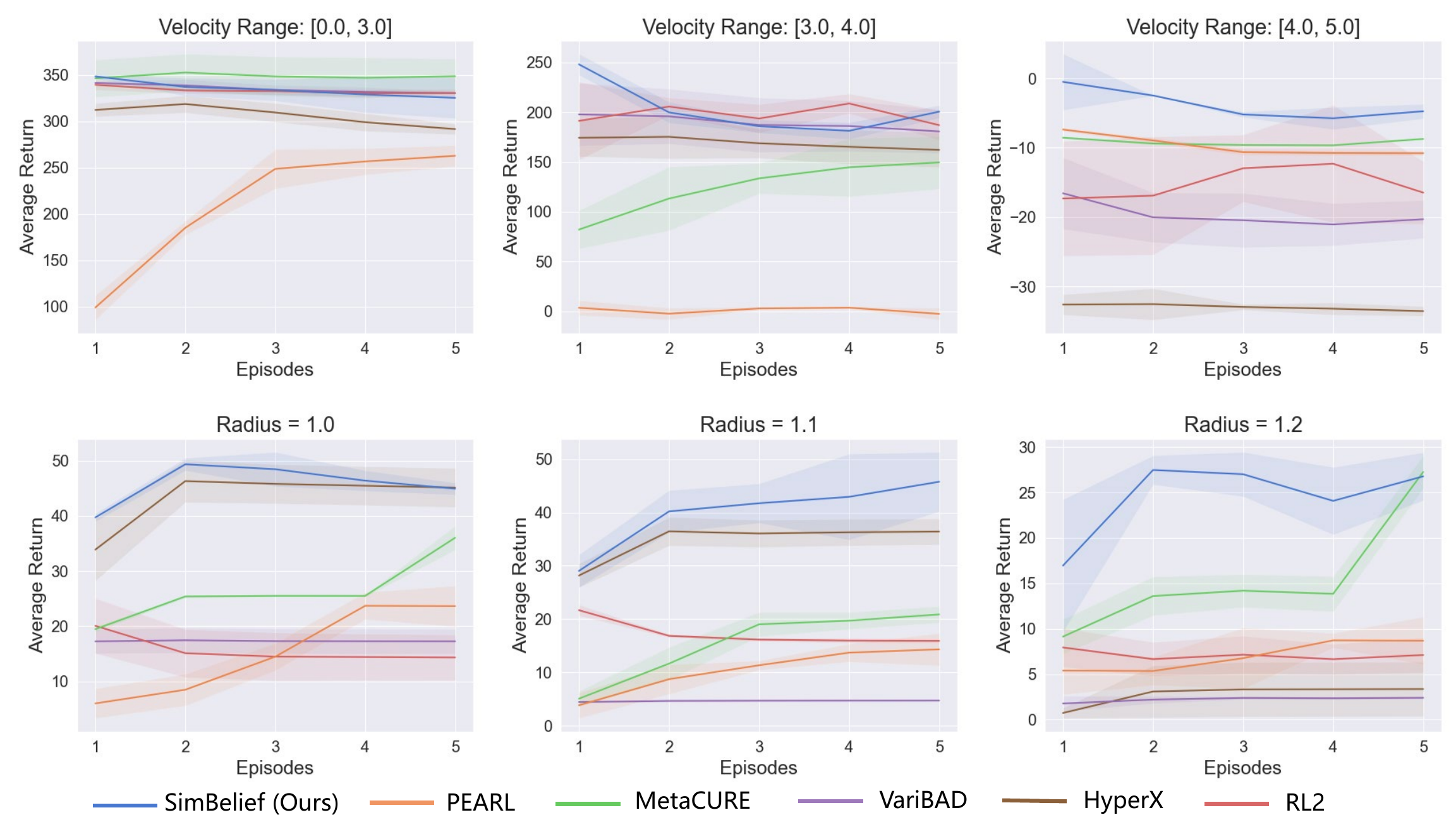}
\end{center} 
\vspace{-12pt}
\caption{Average test performance for the first 5 rollouts on \textbf{out-of-distribution} tasks (Cheetah-Vel-Sparse and Point-Robot-Sparse) shows that as the task distribution gradually deviates from the training distribution, SimBelief demonstrates more efficient and robust adaptation compared to other algorithms, while still maintaining a high return.}\label{fig4}
\vspace{-15pt}
\end{figure}

\textbf{Exploration Performance.} We visualized the exploration and adaptation process of the agent in the Point-Robot-Sparse environment over two episodes, as shown in Figure \ref{fig5}. The test target appeared within a semicircle with a radius of 1.2, and five positions on the semicircle were randomly selected as the agent's targets. Compared to other algorithms, SimBelief successfully identified the randomly appearing target location within the semicircle and controlled the robot to reach the target point within two episodes. Moreover, in the out-of-distribution setting, SimBelief demonstrated more efficient learning of a near Bayes-optimal policy compared to BAMDP-based algorithms like HyperX. The latent task belief enabled the agent to efficiently analogize and identify tasks, leading to more effective exploration. 


\begin{figure}[t]
\vspace{7pt}
\begin{center}
\includegraphics[width=5.1in]{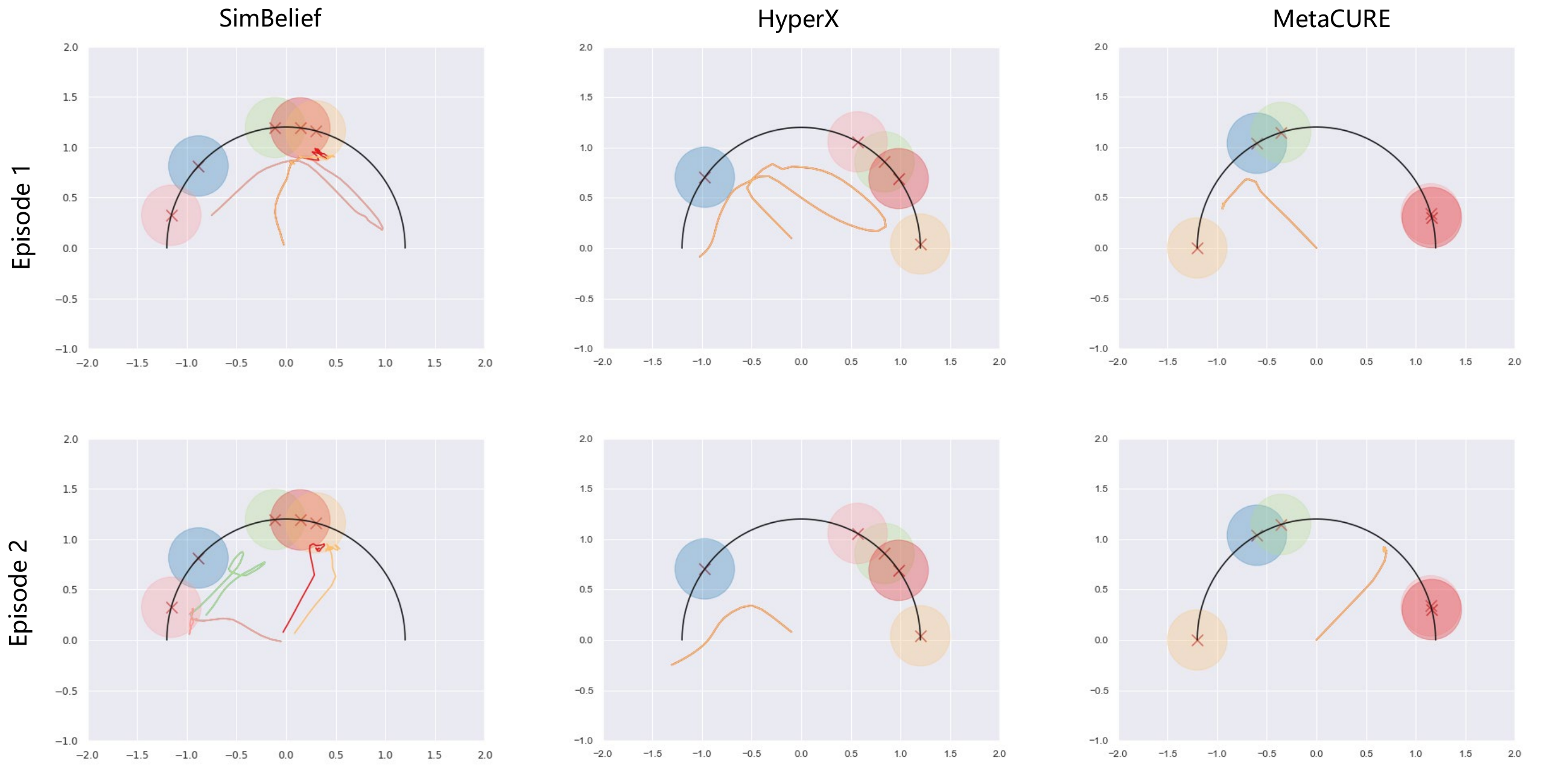}
\end{center} 
\vspace{-12pt}
\caption{Exploration and adaptation performance in the 5 random \textbf{out-of-distribution (radius = 1.2) tasks of Point-Robot-Sparse}. SimBelief is capable of robustly learning a near Bayes-optimal policy.}\label{fig5}
\vspace{-15pt}
\end{figure}

\textbf{Task Belief Visualization.} We use t-SNE \citep{van2008visualizing} to visualize SimBelief's latent task belief and specific task belief (Figure \ref{fig6}), aim to illustrate how the task common structure learned in the latent space contributes to effective reconstruction of the real environment. This visualization reveals the underlying reason for SimBelief's ability to achieve rapid out-of-distribution generalization. The latent space, as an abstraction and holistic representation of the task distribution, enables efficient transfer of essential task information. This, in turn, allows the agent to leverage prior knowledge for identifying and reasoning quickly about unknown environments.

\begin{figure}[t]
\vspace{10pt}
\begin{center}
\includegraphics[width=5.1in]{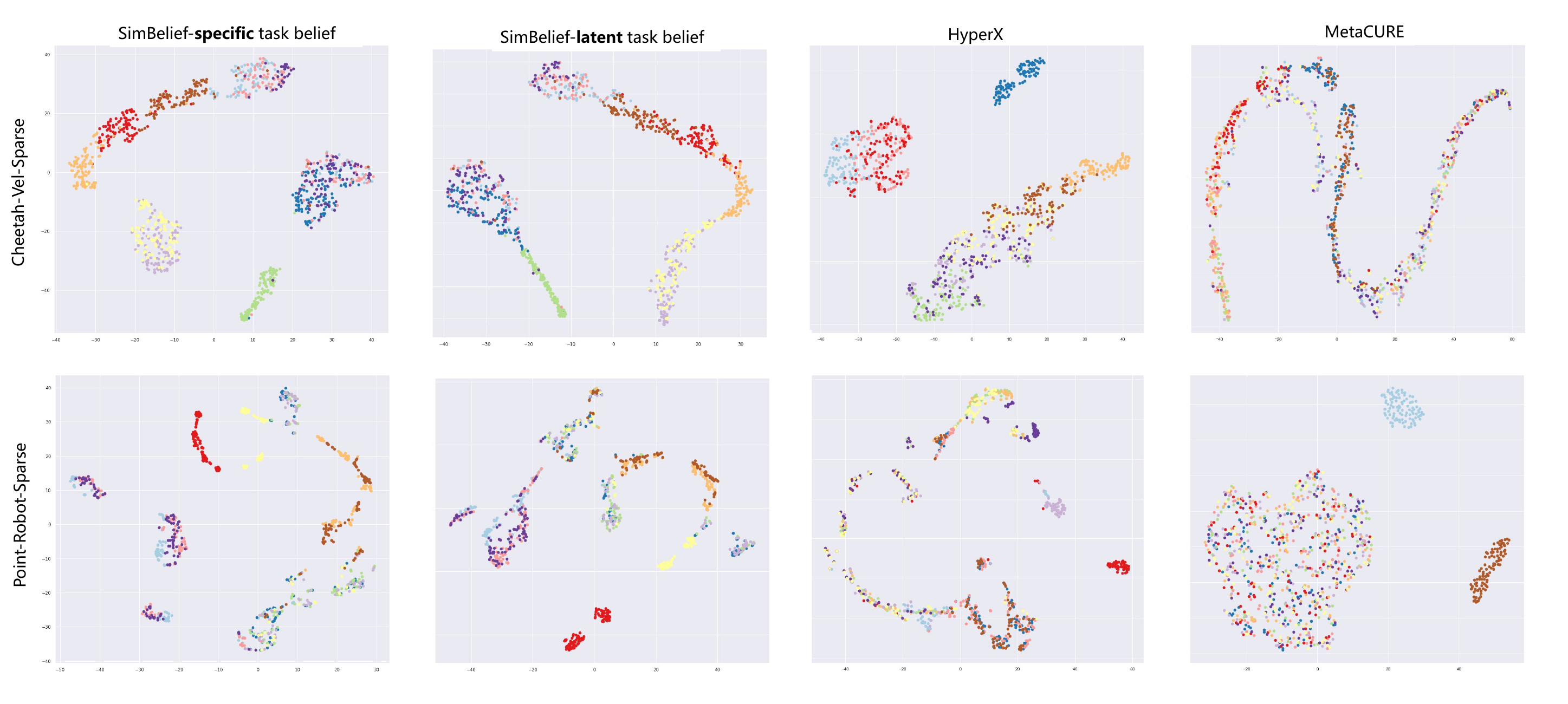}
\end{center} 
\vspace{-12pt}
\caption{t-SNE visualization of task beliefs learned by the algorithm on 10 randomly sampled OOD tasks. The velocity range for Cheetah-Vel-Sparse is set to 3-4, and the semicircle radius for Point-Robot-Sparse is 1.1. SimBelief demonstrates a more distinct representation in the latent space, enhancing the specific task belief to represent similar tasks in the real environment.}\label{fig6}
\vspace{-15pt}
\end{figure}

\textbf{The mechanism of SimBelief for rapid adaptation.} SimBelief, through the latent task belief, essentially learns the transfer relationships between knowledge across different tasks. In extremely sparse reward scenarios, once the agent succeeds in one task, this prior knowledge of success can be quickly propagated to other similar tasks via the latent task belief, enabling the agent to rapidly adapt to similar types of tasks. The latent task belief acts as a bridge, facilitating the generalization of past successes to new, related tasks, significantly reducing exploration time and computational resources. A more detailed discussion is provided in Appendix \ref{mech}.


\section{Related Works}

\textbf{Task representation in reinforcement learning} focuses on how agents encode and utilize task information to enhance learning efficiency and generalization. Meta-reinforcement learning enables agents to quickly adapt to new tasks by learning task priors; for instance, \citep{finn2017model} introduced Model-Agnostic Meta-Learning (MAML) for rapid adaptation with minimal updates, while \citep{rakelly2019efficient} proposed PEARL to learn latent task representations for fast probabilistic adaptation. \citep{zintgraf2019varibad,dorfman2021offline} learn a latent variable model of task distribution for efficient Bayes-adaptive RL, and \citep{gupta2018unsupervised} explored unsupervised meta-learning without explicit task labels. \cite{yuan2022robust,choshen2023contrabar} apply contrastive learning to enhance encoder representation. 
\citep{lee2023parameterizing} decomposes complex tasks into subtasks to handle non-parametric task variability.
In multi-task RL, \citep{teh2017distral} introduced Distral to learn shared policies across tasks with task-specific adaptations. Contextual task representations have also been explored: \citep{sodhani2021multi}  use metadata to learn interpretable representations, and \citep{hausman2018learning} learned embedding spaces for transferable skills. These studies underscore the importance of efficient task representation to improve RL performance and generalization.


\textbf{Exploration with task inference.} Integrating exploration with task inference in reinforcement learning enables agents to learn efficient policies by understanding task structures. Information-theoretic approaches encourage exploration by maximizing the entropy of state visitation distributions, promoting diverse behaviors \citep{liu2021behavior}. 
\citep{Raileanu2020RIDE:} propose a type of intrinsic reward which encourages the agent to take actions that lead to significant changes in its learned state representation. \citep{wan2023deir} introduce DEIR, a method that enhances exploration efficiency and robustness by using discriminative model-based episodic intrinsic rewards.
\citep{yang2023behavior} propose an unsupervised skill discovery method through contrastive learning among behaviors.
\citep{rana2023residual}
propose a low-level residual policy for skill adaptation enabling downstream RL agents to adapt to unseen tasks
Planning-based methods enable agents to plan exploratory actions that maximize information gain by learning world models \citep{sekar2020planning}. 
\citep{xie2020latent} leverage the idea of partial amortization for fast adaptation at test time. 

\textbf{Bisimulation for control.}
Bisimulation has been effectively applied in various control tasks, demonstrating its value in reducing computational burdens without sacrificing policy performance \citep{zhanglearning}. \citep{zhang2learning} propose a method for learning state abstractions that generalize across tasks, improving sample efficiency and performance in multi-task and meta-reinforcement learning by leveraging shared dynamics in complex environments.
 \citep{gelada2019deepmdp} has leveraged bisimulation for efficient exploration by ensuring the agent treats bisimilar states similarly, reducing sample complexity. 
\citep{hansen2022bisimulation} propose a form of state abstraction that captures functional equivariance for goal-conditioned RL. \citep{kemertas2021towards} generalize value function approximation bounds for on-policy bisimulation metrics to non-optimal policies. These studies highlight the increasing applicability of bisimulation in reinforcement learning and control tasks. They offer a variety of solutions that reduce sample complexity, guide exploration, and enhance learning performance, particularly in goal-conditioned settings and beyond.

\section{Conclusion}
We present SimBelief, a meta-RL framework aimed at improving the agent's ability to quickly adapt and generalize in real-world sparse reward tasks. At the core of SimBelief is the latent task belief metric, which learns the similarity between the dynamics of various tasks in a shared latent space and represents this similarity as task belief similarity. This helps the agent efficiently capture the common structure of the task distribution, facilitating the recognition and reasoning about unknown tasks. By combining the latent task belief with the specific task belief learned through interactions in the real environment, SimBelief demonstrates strong adaptation and generalization capabilities during exploration, particularly on out-of-distribution tasks. It overcomes the slow convergence and inefficiency issues inherent in BAMDP. We believe that SimBelief will inspire future research on agent generalization, providing a powerful tool for enabling agents to more efficiently tackle the challenges and complexities of real-world environments.


\bibliography{iclr2025_conference}
\bibliographystyle{iclr2025_conference}

\appendix
\clearpage

\section{Theoretical Background and Analysis}
In this section, we introduce the theoretical background and the rationale behind the design of the latent task belief metric.

\begin{definition}[Bisimulation Relation \citep{givan2003equivalence}]
Given an MDP \( \mathcal{M} \), an equivalence relation \( E \subseteq S \times S \) is a \textit{bisimulation relation} if whenever \( (s, t) \in E \), the following properties hold, where \( S_E \) is the state space \( S \) partitioned into equivalence classes defined by \( E \):
\begin{align}
\forall a \in A, \mathcal{R}(s, a) = \mathcal{R}(t, a)
\end{align}
\begin{align}
    \forall a \in A, \forall c \in S_E, \mathcal{P}(s, a)(c) = \mathcal{P}(t, a)(c), where \mathcal{P}(s, a)(c) = \sum_{s' \in c} \mathcal{P}(s, a)(s') 
\end{align}

\end{definition}

\begin{definition}[Diffuse Metric]
A \textit{diffuse metric} measures the distance between two points by considering not only the shortest path but also the distribution of multiple paths between the points. The diffuse metric satisfies the following properties:

1. Non-negativity: For any two points \( x \) and \( y \), \( d(x, y) \geq 0 \), and \( d(x, y) = 0 \) if and only if \( x = y \).\\
2. Symmetry: For any two points \( x \) and \( y \), \( d(x, y) = d(y, x) \).\\
3. Triangle inequality: For any three points \( x \), \( y \), and \( z \), \( d(x, z) \leq d(x, y) + d(y, z) \).

These properties ensure that the diffuse metric behaves as a valid distance measure across the space.
\end{definition}

To prove that the metric \( d_\pi(z_i, z_j) \) in Defination \ref{def2} is valid, we show it satisfies three key properties: non-negativity, symmetry, and the triangle inequality.

\begin{proof}
Each term in the metric \( d_\pi(z_i, z_j) \) is non-negative:
\( |R_i^\pi(s_i^+, a_i) - R_j^\pi(s_j^+, a_j)| \) is the absolute difference in rewards, which is non-negative.
\( W_2(T^\pi_i(s_i^+, a_i), T^\pi_j(s_j^+, a_j)) \) is the Wasserstein distance, which is non-negative.
\( \| I^\pi_i(s_i^+, s_{i+1}^+) - I^\pi_j(s_j^+, s_{j+1}^+) \|_1 \) is an \( L_1 \)-norm, which is non-negative.
Thus, \( d_\pi(z_i, z_j) \geq 0 \). Additionally, all three types of distances satisfy symmetry, hence \( d_\pi(z_i, z_j) = d_\pi(z_j, z_i) \).

To derive the triangle inequality for latent task belief metric, we need to verify that for any three task beliefs \( z_i \), \( z_j \), and \( z_k \), the following holds:
\[
d_\pi(z_i, z_k) \leq d_\pi(z_i, z_j) + d_\pi(z_j, z_k)
\]
Expand the distance \( d_\pi(z_i, z_j) \) and \( d_\pi(z_j, z_k) \)
\[
d_\pi(z_i, z_j) = |R^\pi_i(s^+_i, a_i) - R^\pi_j(s^+_j, a_j)| + W_2(d_\pi)(T^\pi_i(s^+_i, a_i), T^\pi_j(s^+_j, a_j)) + \|I^\pi_i(s^+_i, s'^+_i) - I^\pi_j(s^+_j, s'^+_j)\|_1
\]
and similarly,
\[
d_\pi(z_j, z_k) = |R^\pi_j(s^+_j, a_j) - R^\pi_k(s^+_k, a_k)| + W_2(d_\pi)(T^\pi_j(s^+_j, a_j), T^\pi_k(s^+_k, a_k)) + \|I^\pi_j(s^+_j, s'^+_j) - I^\pi_k(s^+_k, s'^+_k)\|_1
\]
Expand the distance \( d_\pi(z_i, z_k) \)
\[d_\pi(z_i, z_k) = |R^\pi_i(s^+_i, a_i) - R^\pi_k(s^+_k, a_k)| + W_2(d_\pi)(T^\pi_i(s^+_i, a_i), T^\pi_k(s^+_k, a_k)) + \|I^\pi_i(s^+_i, s'^+_i) - I^\pi_k(s^+_k, s'^+_k)\|_1\]
To prove the triangle inequality, we need to show that:
\[
|R^\pi_i(s^+_i, a_i) - R^\pi_k(s^+_k, a_k)| \leq |R^\pi_i(s^+_i, a_i) - R^\pi_j(s^+_j, a_j)| + |R^\pi_j(s^+_j, a_j) - R^\pi_k(s^+_k, a_k)|
\]
This follows directly from the standard triangle inequality for absolute values.
Similarly, for the Wasserstein distance term:
\[
W_2(d_\pi)(T^\pi_i(s^+_i, a_i), T^\pi_k(s^+_k, a_k)) \leq W_2(d_\pi)(T^\pi_i(s^+_i, a_i), T^\pi_j(s^+_j, a_j)) + W_2(d_\pi)(T^\pi_j(s^+_j, a_j), T^\pi_k(s^+_k, a_k))
\]
This holds due to the triangle inequality for the Wasserstein distance \citep{villani2009optimal}.
Finally, for the inverse dynamics term:
\[
\|I^\pi_i(s^+_i, s'^+_i) - I^\pi_k(s^+_k, s'^+_k)\|_1 \leq \|I^\pi_i(s^+_i, s'^+_i) - I^\pi_j(s^+_j, s'^+_j)\|_1 + \|I^\pi_j(s^+_j, s'^+_j) - I^\pi_k(s^+_k, s'^+_k)\|_1
\]
This also follows directly from the standard triangle inequality for norms.
Since each of the three components satisfies the triangle inequality, we can conclude that the latent task belief metric\( d_\pi \) also satisfies the triangle inequality:
\[
d_\pi(z_i, z_k) \leq d_\pi(z_i, z_j) + d_\pi(z_j, z_k)
\]
\end{proof}

\section{Theorems and Proofs}
\label{apdB}
\setcounter{theorem}{0} 



\begin{lemma}
Let $\mathcal{S}^{+} = \mathcal{G}(\mathcal{S}) \times \mathcal{Z}$ be the latent dynamics space, and $\pi$ is the improving policy. The $\mathcal{F}$ function is defined as follows:
\begin{align}
    \mathcal{F}(d, \pi)(z_i, z_j) = &|R^{\pi}_i(s^+_i, a_i) - R^{\pi}_j(s^+_j, a_j)|
+ W_2(d_{\pi})(T^{\pi}_i(s^+_i, a_i), T^{\pi}_j(s^+_j, a_j)) \nonumber \\& 
+ \|I^{\pi}_i(s^+_i, s^{+}_{i+1}) - I^{\pi}_j(s^+_j, s^{+}_{j+1})\|_1,
\end{align}
where \( s^+_i = (g(s_i), z_i) \), \( s^+_j = (g(s_j), z_j) \) are the augmented states corresponding to task \( i \) and task $j$, and $z_i$, $z_j$ represent different task beliefs in latent space $\mathcal{S}^{+}$.  
Then, there exists a unique least fixed point \( \tilde{d} \) such that $\mathcal{F}(\tilde{d}) = \tilde{d}$.

\end{lemma}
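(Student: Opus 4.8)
The plan is to exhibit $\mathcal{F}(\cdot,\pi)$ as a self-map of a complete metric space of bounded pseudometrics and to apply the Banach fixed-point theorem, following the template used for bisimulation operators \citep{ferns2004metrics,castro2020scalable}. Let $\mathbb{M}$ be the set of bounded pseudometrics $d$ on the latent space $\mathcal{S}^{+}$ (indexed through the beliefs $z_i,z_j$ via $s^+_i=(g(s_i),z_i)$), equipped with the uniform distance $\rho(d,d')=\sup_{z_i,z_j}|d(z_i,z_j)-d'(z_i,z_j)|$. First I would check that $(\mathbb{M},\rho)$ is complete: a $\rho$-Cauchy sequence converges uniformly to a limit that again satisfies non-negativity, symmetry and the triangle inequality, since each of these properties is preserved under uniform limits, so the limit lies in $\mathbb{M}$.

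Next I would verify that $\mathcal{F}$ maps $\mathbb{M}$ into itself. The reward term and the inverse-dynamics term are fixed functions of $(z_i,z_j)$ that do not depend on the argument $d$, and each contributes a valid pseudometric exactly by the non-negativity, symmetry and triangle-inequality arguments already carried out for $d_\pi$ in Appendix A; the Wasserstein term $W_2(d)(T^\pi_i,T^\pi_j)$ is a pseudometric on transition distributions whenever $d$ is a pseudometric on the ground space. Combined with boundedness of the rewards and of $d$, this gives $\mathcal{F}(d)\in\mathbb{M}$.

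The crux is the contraction estimate. Since only the transition term depends on $d$, for any $d,d'\in\mathbb{M}$ we have $\mathcal{F}(d)(z_i,z_j)-\mathcal{F}(d')(z_i,z_j)=W_2(d)(T^\pi_i,T^\pi_j)-W_2(d')(T^\pi_i,T^\pi_j)$. The plan is to bound this by comparing transport couplings: plugging the coupling optimal for one ground metric into the transport cost of the other yields $|W_p(d)(\mu,\nu)-W_p(d')(\mu,\nu)|\le\rho(d,d')$, so $\mathcal{F}$ is non-expansive, and the discount factor $\gamma<1$ weighting the transition term (as in Definition \ref{def1}) upgrades this to $\rho(\mathcal{F}(d),\mathcal{F}(d'))\le\gamma\,\rho(d,d')$. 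Banach's theorem then produces a fixed point $\tilde d$ with $\mathcal{F}(\tilde d)=\tilde d$ that is unique, hence is the least fixed point; equivalently, since $\mathcal{F}$ is monotone in $d$ on the complete lattice of bounded pseudometrics, one may instead invoke Knaster--Tarski to obtain the least fixed point directly.

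The step I expect to be the main obstacle is the Lipschitz-in-ground-metric bound for the $2$-Wasserstein term. For $W_1$ the coupling comparison is immediate, but the square inside the $W_2$ transport cost and the outer square root make the dependence on $d$ less transparent; I would either specialize to the bounded (Gaussian) regime, where the closed-form $W_2$ expression \citep{villani2009optimal} makes the estimate explicit, or carry an explicit constant through the coupling argument and absorb it together with $\gamma$ into the contraction factor. The remaining verifications — completeness of $(\mathbb{M},\rho)$ and the pseudometric axioms for $\mathcal{F}(d)$ — are routine and parallel Appendix A.
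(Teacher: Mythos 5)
Your primary route has a genuine gap at exactly the step you flag as the crux. The operator $\mathcal{F}$ as defined in the lemma (and in Definition \ref{def2}) carries \emph{no} discount factor on the $W_2$ term: the transition cost enters with coefficient $1$, not $\gamma<1$. The coupling-comparison estimate you describe does go through for $W_2$ (if $d\le d'+\epsilon$ pointwise, then plugging the $d'$-optimal coupling into the $d$-transport cost and using the triangle inequality in $L^2$ gives $W_2(d)\le W_2(d')+\epsilon$), but it only yields $\rho(\mathcal{F}(d),\mathcal{F}(d'))\le\rho(d,d')$, i.e.\ non-expansiveness. You cannot ``borrow'' the $\gamma$ from Definition \ref{def1}, because the operator you are asked to analyze does not contain it, and Banach's theorem fails for maps that are merely non-expansive (consider the identity, which has no unique fixed point). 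So the contraction argument, as written, does not establish existence or uniqueness for this $\mathcal{F}$.

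Your parenthetical fallback is the correct route and is essentially what the paper does: it works on the $\omega$-CPO of pseudometrics ordered pointwise, shows $\mathcal{F}$ is monotone (the $W_2$ cost is non-decreasing in the ground metric, and the reward and inverse-dynamics terms do not depend on $d$), shows $\mathcal{F}$ commutes with suprema of increasing $\omega$-chains, and then invokes the Kleene/Tarski-style fixed-point theorem of \citet{ferns2004metrics} to obtain a \emph{least} fixed point $\tilde d$. Note that without a contraction you only get uniqueness of the \emph{least} fixed point (which is what the lemma actually asserts), not uniqueness of fixed points outright; your claim that the Banach fixed point is ``unique, hence the least'' inverts the logic that is actually available here. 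If you promote the monotonicity-plus-continuity argument from an aside to the main proof, and drop the contraction claim, your write-up matches the paper's.
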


\begin{proof}

To prove that \( \mathcal{F} \) has a unique least fixed point, we need to establish that \( \mathcal{F} \) is monotonic and continuous, ensuring the conditions for a least fixed-point result.

We first show that \( \mathcal{F} \) is monotonic. Suppose \( d \leq d' \), meaning \( d(z_i, z_j) \leq d'(z_i, z_j) \) for all \( z_i, z_j \) in latent space. Then, by the definition of \( \mathcal{F} \), we have:
\[
W_2(d_{\pi})(T^{\pi}_i(s^+_i, a_i), T^{\pi}_j(s^+_j, a_j)) \leq W_2(d'_{\pi})(T^{\pi}_i(s^+_i, a_i), T^{\pi}_j(s^+_j, a_j)).
\]

Since the Wasserstein distance is non-decreasing in \( d \), it follows that \( \mathcal{F}(d, \pi) \leq \mathcal{F}(d', \pi) \). Additionally, the reward difference and inverse dynamics term remain unchanged under ordering, confirming that \( \mathcal{F} \) is indeed monotonic.

Next, we show that \( \mathcal{F} \) is continuous with respect to the pointwise supremum of an $\omega$-chain \( \{d_n\} \). Consider the sequence \( \{d_n\} \) forming an increasing chain in \( \mathcal{M} \), meaning \( d_n(z_i, z_j) \) converges to \( d_{\infty}(z_i, z_j) \). By the properties of the Wasserstein metric and $L1$ norm, limits commute with these operations, leading to:
\begin{align}
    \mathcal{F}^{\pi}(\bigsqcup_{n \in \mathbb{N}} d_n)(z_i, z_j) = &|R^{\pi}_i(s^+_i, a_i) - R^{\pi}_j(s^+_j, a_j)|
+ W_2(\bigsqcup_{n \in \mathbb{N}} d_n)(T^{\pi}_i(s^+_i, a_i), T^{\pi}_j(s^+_j, a_j)) \nonumber \\& 
+ \|I^{\pi}_i(s^+_i, s^{+}_{i+1}) - I^{\pi}_j(s^+_j, s^{+}_{j+1})\|_1  \nonumber \\&
=|R^{\pi}_i(s^+_i, a_i) - R^{\pi}_j(s^+_j, a_j)| + \sup_{n \in \mathbb{N}} W_2 (d_n) (T^{\pi}_i(s^+_i, a_i), T^{\pi}_j(s^+_j, a_j)) \nonumber \\&
+ \|I^{\pi}_i(s^+_i, s^{+}_{i+1}) - I^{\pi}_j(s^+_j, s^{+}_{j+1})\|_1 \nonumber \\&
= \sup_{n \in \mathbb{N}} (|R^{\pi}_i(s^+_i, a_i) - R^{\pi}_j(s^+_j, a_j)| +  W_2 (d_n) (T^{\pi}_i(s^+_i, a_i), T^{\pi}_j(s^+_j, a_j)) \nonumber \\&
+ \|I^{\pi}_i(s^+_i, s^{+}_{i+1}) - I^{\pi}_j(s^+_j, s^{+}_{j+1})\|_1)      \nonumber \\&
=(\bigsqcup_{n \in \mathbb{N}}\mathcal{F}^{\pi}( d_n))(z_i, z_j)
\end{align}

This guarantees that \( \mathcal{F} \) is continuous. The policy \( \pi \) in the above equations is fixed. As the policy updates, it can converge to a fixed point.

Since \( \mathcal{M} \) is a complete partial order ($\omega$-CPO) and \( \mathcal{F} \) is a monotonic and continuous operator, the fixed-point theorem \citep{ferns2004metrics} ensures the existence of a least fixed point \( \tilde{d} \) such that $\mathcal{F}(\tilde{d}) = \tilde{d}$.

\end{proof}

\begin{theorem}[Value difference bound]
Given two tasks \( \mathcal{M}_{i} \) and \( \mathcal{M}_{j} \) in the latent space with states \( s_i^+, s_j^+ \in S^{+} \), and let \( V^\pi \) be the value function of policy \( \pi \), the value difference bound between the tasks can be given by:
\begin{align}
    |V^\pi(s_i^+) - V^\pi(s_j^+)| \leq d_\pi(z_i, z_j),
\end{align}
where \( d_\pi(z_i, z_j) \) is the latent task belief metric.
\end{theorem}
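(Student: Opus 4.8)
The plan is to adapt the classical bisimulation value-bound argument to the latent setting, exploiting the fixed-point structure established in the preceding Lemma. The central idea is to show that $V^\pi$ is $1$-Lipschitz with respect to the latent task belief metric $d_\pi$, after which the bound follows immediately via Kantorovich--Rubinstein duality. The observation that makes the modified metric tractable is that Definition \ref{def2} uses $W_2$ without the usual $\gamma$ discount and carries an extra non-negative inverse-dynamics term, so the bound obtained from the standard $\gamma W_1$ estimate is only \emph{loosened} by these modifications, never tightened.

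First I would set up a coupled induction over the iterates of policy evaluation and of the operator $\mathcal{F}$. Let $V_0 \equiv 0$ and $V_{n+1}(s^+) = \E_{a \sim \pi}\left[ R^\pi(s^+, a) + \gamma\, \E_{s'^+ \sim T^\pi}[V_n(s'^+)] \right]$, and let $d_0 \equiv 0$ with $d_{n+1} = \mathcal{F}(d_n, \pi)$. The inductive claim is $|V_n(s_i^+) - V_n(s_j^+)| \le d_n(z_i, z_j)$ for every $n$, i.e.\ $V_n$ is $1$-Lipschitz with respect to $d_n$. The base case is immediate since both sides vanish.

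For the inductive step I would expand the difference through the Bellman recursion, apply the triangle inequality to separate the reward term from the expected-value term, and then bound the latter. Since the inductive hypothesis makes $V_n$ a $1$-Lipschitz function with respect to $d_n$, Kantorovich--Rubinstein duality gives $|\E_{T_i^\pi}[V_n] - \E_{T_j^\pi}[V_n]| \le W_1(d_n)(T_i^\pi, T_j^\pi)$. I would then chain $\gamma W_1(d_n) \le W_1(d_n) \le W_2(d_n)$, using $\gamma \le 1$ together with the monotonicity of $W_p$ in $p$, and finally pad with the non-negative inverse-dynamics term $\|I_i^\pi - I_j^\pi\|_1 \ge 0$; this recovers exactly $\mathcal{F}(d_n, \pi)(z_i, z_j) = d_{n+1}(z_i, z_j)$. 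Passing to the limit closes the argument: policy evaluation converges to $V^\pi$ and, by the Lemma, the metric iterates converge to the least fixed point $\tilde d = d_\pi$, so the finite-stage inequalities carry over to the limit.

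The step I expect to be the main obstacle is the Lipschitz/duality transition. I must justify that invoking Kantorovich--Rubinstein with the \emph{evolving} metric $d_n$ (rather than a fixed ground metric) is legitimate at each stage, and confirm that the $W_1$-based duality estimate genuinely controls the $W_2$ term appearing in the metric. There is also a mild framing issue in that the metric is written as a function of the latent beliefs $z_i, z_j$ while the Wasserstein distance acts on distributions over next augmented states $s'^+$; I would need to make precise that $d_\pi$ serves as the ground metric on the latent component of $\mathcal{S}^+$ so that the recursion in $\mathcal{F}$ and the Lipschitz property refer to the same object. The discount-free, $W_2$-based definition is precisely what lets the comparison $\gamma W_1 \le W_2$ go through, but it must be verified that this chain does not implicitly require a uniform bound on the value functions that could fail in the abstracted latent space.
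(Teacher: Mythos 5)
Your proposal is correct and follows essentially the same route as the paper's proof: induction on the value-iteration sequence, Bellman decomposition into a reward term and an expected-next-value term, a Wasserstein bound on the latter via the Lipschitz/Kantorovich--Rubinstein argument, and passage to the fixed point of $\mathcal{F}$ from the preceding lemma. If anything, your explicit chain $\gamma W_1(d_n)\le W_1(d_n)\le W_2(d_n)$ together with padding by the non-negative inverse-dynamics term gives a cleaner justification of the two steps the paper handles only informally (it "disregards" the discount factor $\gamma$ and lets the $\|I_i^\pi-I_j^\pi\|_1$ term appear in the bound without comment).
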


\begin{proof}

We will use the standard value function update: 
\[
V_n^\pi(s) = R^\pi(s, a) + \gamma \sum_{s' \in S} P^\pi(s' \mid s, a) V_{n-1}^\pi(s')
\]
with \( V_0^\pi(s) = 0 \), and our task difference metric \( d_\pi(z_i, z_j) \), and prove this by induction, showing that for all \( n \in \mathbb{N} \) and states \( s_i^+, s_j^+ \in \mathcal{S}^{+} \):

\[
|V_n^\pi(s_i^+) - V_n^\pi(s_j^+)| \leq d_\pi^n(z_i, z_j).
\]
The base case holds trivially: 
\[
0 = V_0^\pi(s_i^+) - V_0^\pi(s_j^+) = 0, \quad \text{so assume this holds for} \, n.
\]

Now, for \( n+1 \):
\[
|V_{n+1}^\pi(s_i^{+}) - V_{n+1}^\pi(s_j^{+})| = |R_i^\pi(s_i^{+}, a_i) + \gamma \sum_{s' \in S^+} T_i^\pi({s'}_i^{+} \mid s_i^+, a_i) V_n^\pi({s'}_i^{+}) - R_j^\pi(s_j^+, a_j) - \gamma \sum_{s' \in S^+} T_j^\pi({s'}_{j}^+ \mid s_{j}^+, a_j) V_n^\pi({s'}_j^{+})|.
\]
We decompose this into rewards and transition dynamics:
\[
\leq |R_{i}^\pi(s_i^+, a_i) - R_{j}^\pi(s_j^+, a_j)| + \gamma \left| \sum_{s' \in S^+} \left( T_{i}^\pi({s'}_i^{+} \mid s_i^+, a_i) V_n^\pi({s'}_i^{+}) - T_{j}^\pi({s'}_j^{+} \mid s_j^+, a_j) V_n^\pi({s'}_j^{+}) \right) \right|.
\]

Using the Lipschitz property of the value function, we can bound the transition dynamics difference:
\[
\leq |R_{i}^\pi(s_i^{+}, a_i) - R_{j}^\pi(s_j^{+}, a_j)| + \gamma W_2 \left( T_{i}^\pi(s_i^{+}, a_i), T_{j}^\pi(s_j^+, a_j) \right) +  \left\| I_{i}^\pi(s_i^+, {s'}_{i}^{+}) - I_{j}^\pi(s_j^+, {s'}_{j}^+) \right\|_1.
\]


By the definition of the latent task belief metric \( d_\pi(z_i, z_j) \), we know:
\[
d_\pi(z_i, z_j) = |R_{i}^\pi(s_i^{+}, a_i) - R_{j}^\pi(s_j^{+}, a_j)| + W_2 \left( T_{i}^\pi(s_i^+, a_i), T_{j}^\pi(s_j^+, a_j) \right) + \left\| I_{i}^\pi(s_i^+, {s'}_{i}^+) - I_{j}^\pi(s_j^+, {s'}_{j}^+) \right\|_1.
\]
We assume that \( z \) contains sufficient information about the task differences at the current time step, and we disregard the effect of the discount factor \( \gamma \) during the experiments. Thus, we have:
\[
|V_{n+1}^\pi(s_i^+) - V_{n+1}^\pi(s_j^+)| \leq d_\pi^{n+1}(z_i, z_j).
\]

\end{proof}

\begin{lemma}
\label{lemma2}
Let $g$ be an $(\epsilon_R, \epsilon_T, \epsilon_I)$-approximate bisimulation abstraction of $M$. 
The states in the real space are mapped to the latent dynamics space \( \mathcal{S}^{+} \) through \( g \). For any two states in the real space, \( s_1 \) and \( s_2 \), if their representations in the latent space are identical, i.e., \( g(s_1) = g(s_2) \), then
\[
|R(s_1, a) - R(s_2, a)| \leq \epsilon_R,
\]
\[
\|T(s_1, a) - T(s_2, a)\|_1 \leq \epsilon_T,
\]
\[
\|I(s_1, s_1^{'}) - I(s_2, s_2^{'})\|_1 \leq \epsilon_I.
\]
    
\end{lemma}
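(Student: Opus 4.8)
The plan is to obtain this lemma as an essentially immediate consequence of the definition of an $(\epsilon_R, \epsilon_T, \epsilon_I)$-approximate bisimulation abstraction, which is the natural quantitative relaxation of the exact bisimulation relation recalled earlier in the appendix. First I would make that definition explicit, since the lemma statement takes it as given: $g$ is an $(\epsilon_R, \epsilon_T, \epsilon_I)$-approximate bisimulation abstraction precisely when any two ground states that collapse to the same latent representation agree, up to the stated tolerances, on the three behavioral quantities that appear in the latent task belief metric of Definition \ref{def2} — the reward model, the transition kernel, and the inverse dynamics model. Concretely, the abstraction condition asserts that whenever $g(s_1) = g(s_2)$, then for every admissible action $a$ the reward discrepancy is at most $\epsilon_R$, the $L_1$ distance between transition distributions is at most $\epsilon_T$, and the $L_1$ distance between inverse dynamics predictions is at most $\epsilon_I$.

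Given this, the proof is an unfolding of the hypothesis. Since $g(s_1) = g(s_2)$ by assumption, the two states lie in the same abstraction class, so each of the three defining inequalities applies verbatim and yields the three displayed bounds. I would treat the three terms independently, as they are decoupled in both the metric of Definition \ref{def2} and in the abstraction definition: the reward bound $|R(s_1,a)-R(s_2,a)| \le \epsilon_R$ is read off directly, the transition bound $\|T(s_1,a)-T(s_2,a)\|_1 \le \epsilon_T$ is the $L_1$ discrepancy controlled by the abstraction, and the inverse dynamics bound $\|I(s_1,s_1')-I(s_2,s_2')\|_1 \le \epsilon_I$ is a direct $L_1$ comparison of the two predicted action distributions. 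No induction or fixed-point machinery (as in the preceding lemma) is needed here.

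The one place requiring care — and the main obstacle — is aligning the transition term across the two formulations: Definition \ref{def2} measures transitions with a Wasserstein distance on the latent dynamics space, whereas this lemma states an $L_1$ bound $\epsilon_T$ on $T(s_1,a)$ versus $T(s_2,a)$. I would resolve this by noting that the approximate bisimulation abstraction is defined so that collapsed states share downstream dynamics up to $\epsilon_T$ in $L_1$ (equivalently total variation), which in turn controls the $W_2$ contribution whenever the underlying metric on next-states is bounded. The $L_1$ quantity is therefore exactly what the abstraction directly constrains and exactly what feeds into the transfer analysis of Theorem \ref{th2}; everything else follows immediately from the definition, so the lemma is best presented as a structural restatement rather than a computation.
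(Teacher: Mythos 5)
Your proposal is correct and matches the paper's approach: the paper proves this lemma by simply invoking Lemma 3 of \citep{jiang2018notes} (which establishes the reward and transition bounds as a direct consequence of the definition of an approximate bisimulation abstraction) and noting that the inverse-dynamics constraint is added in exactly the same definitional way. Your more explicit unfolding of the abstraction definition, including the remark on the $L_1$ versus Wasserstein formulation of the transition term, is consistent with how the bound is subsequently used in the proof of Theorem \ref{th2}.
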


\begin{proof}
    The proof follows a similar process to Lemma 3 in \citep{jiang2018notes}. Here, we have added a representation constraint on the inverse dynamics.
\end{proof}


\begin{theorem}[Latent transfer bound]
    Let \( Q_{\mathcal{M}_j}^* \) be the optimal Q-function for task \( \mathcal{M}_j \). The difference between \( Q_{\mathcal{M}_j}^* \) and the Q-function of the policy \( \pi \) learned from task \( \mathcal{M}_i \), applied to task \( \mathcal{M}_j \), is bounded as follows:
    \begin{align}
    \left\| Q_{\mathcal{M}_j}^* - [Q_{\mathcal{M}_i}^\pi]_{\mathcal{M}_j} \right\|_\infty \leq \epsilon_R + \gamma \left( \epsilon_T + \epsilon_I + \| z_i - z_j \|_1 \right) \frac{R_{\text{max}}}{2(1 - \gamma)}.
\end{align}
\end{theorem}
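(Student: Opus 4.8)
The plan is to bound the transfer error by decomposing the difference $\left\| Q_{\mathcal{M}_j}^* - [Q_{\mathcal{M}_i}^\pi]_{\mathcal{M}_j} \right\|_\infty$ through the Bellman equations on the two tasks and controlling each source of discrepancy separately. First I would write the optimal Q-function for $\mathcal{M}_j$ via its Bellman optimality equation and the transferred Q-function $[Q_{\mathcal{M}_i}^\pi]_{\mathcal{M}_j}$ via the Bellman equation for the fixed policy $\pi$ evaluated under the dynamics of $\mathcal{M}_j$. Subtracting these and taking the supremum norm, the immediate-reward terms contribute a gap controlled by the reward approximation error, while the bootstrapped future-value terms contribute a discounted gap involving the transition and inverse-dynamics mismatches plus the propagated error in the next-state value estimates.

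The key steps, in order, are as follows. (1) Invoke Lemma~\ref{lemma2}: since the two tasks are related through the $(\epsilon_R,\epsilon_T,\epsilon_I)$-approximate bisimulation abstraction $g$, the per-step reward difference is at most $\epsilon_R$, the transition-distribution difference is at most $\epsilon_T$ in the appropriate norm, and the inverse-dynamics difference is at most $\epsilon_I$. (2) Introduce the latent-belief separation term $\| z_i - z_j \|_1$ to account for the residual difference in the latent task beliefs that is not absorbed by the abstraction, treating it on the same footing as the transition and inverse-dynamics errors inside the discounted future term. (3) Bound the value functions uniformly using $\|V\|_\infty \le \frac{R_{\max}}{1-\gamma}$, so that a transition-distribution mismatch of total variation $\delta$ translates into a value error of at most $\delta \cdot \frac{R_{\max}}{2(1-\gamma)}$ (the factor $\tfrac12$ coming from the total-variation coupling between two distributions). (4) Assemble the immediate term $\epsilon_R$ with the discounted future term $\gamma(\epsilon_T + \epsilon_I + \|z_i - z_j\|_1)\frac{R_{\max}}{2(1-\gamma)}$, giving exactly the claimed bound. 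A contraction-type argument on the Bellman operator (or, equivalently, a telescoping over the infinite horizon) justifies that the one-step discrepancies aggregate into the stated closed form rather than compounding.

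The main obstacle I expect is step (3): correctly converting the transition and inverse-dynamics mismatches into a value-error bound with the precise constant $\frac{R_{\max}}{2(1-\gamma)}$. This requires pairing the $W_2$ / $L_1$ measures of distributional discrepancy used in the latent task belief metric with a uniform bound on the value function, and being careful that the factor of $\tfrac12$ reflects the standard relationship $\left| \sum_{s'} (P_i(s') - P_j(s')) V(s') \right| \le \|P_i - P_j\|_1 \cdot \frac{\|V\|_\infty}{2}$ when $V$ is shifted to be centered. A secondary subtlety is justifying why the inverse-dynamics error $\epsilon_I$ and the belief gap $\|z_i - z_j\|_1$ may be placed additively alongside $\epsilon_T$ inside the discounted term; I would argue this follows because each independently perturbs the effective next-state distribution entering the bootstrap, so their contributions to the one-step value gap are additive before discounting. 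Once these constants are pinned down, the remaining assembly is routine and follows the structure already used in the proof of Theorem~\ref{th1}.
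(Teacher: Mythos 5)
Your proposal follows essentially the same route as the paper's proof: a Bellman-equation decomposition combined with Lemma~\ref{lemma2} to bound the reward, transition, and inverse-dynamics discrepancies, the additive inclusion of $\|z_i - z_j\|_1$ for the residual latent-belief gap, and the centered-value argument $\|V^*_{\mathcal{M}_j} - \tfrac{R_{\max}}{2(1-\gamma)}\mathbf{1}\|_\infty \le \tfrac{R_{\max}}{2(1-\gamma)}$ that yields the factor of $\tfrac12$. The subtleties you flag (the total-variation coupling constant and the additive treatment of $\epsilon_I$ and $\|z_i-z_j\|_1$ alongside $\epsilon_T$) are handled in the paper exactly as you anticipate, so no further comparison is needed.
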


\begin{proof}

\( \| z_i - z_j \|_1 \) is the distance between the latent task beliefs of task \( \mathcal{M}_i \) and task \( \mathcal{M}_j \). In the real state space, the augmented state is still conditioned on \( z_l \) (Section \ref{3.3} ). 

We aim to bound the difference between the Q-function \( Q^*_{\mathcal{M}_j} \) (the optimal Q-function for task \( \mathcal{M}_j \)) and \( [Q^\pi_{\mathcal{M}_i}]_{\mathcal{M}_j} \) (the Q-function for policy \( \pi \) learned on task \( \mathcal{M}_i \), but applied to task \( \mathcal{M}_j \)). This difference is given by:
\[
\| Q^*_{\mathcal{M}_j} - [Q^\pi_{\mathcal{M}_i}]_{\mathcal{M}_j} \|_\infty.
\]

The Q-function for any task satisfies the Bellman equation:
\[
Q(s, a) = R(s, a) + \gamma \mathbb{E}_{s' \sim T(s, a)} \left[ \max_{a'} Q(s', a') \right].
\]

Using this, we decompose the Q-function difference:
\[
\| Q^*_{\mathcal{M}_j} - [Q^\pi_{\mathcal{M}_i}]_{\mathcal{M}_j} \|_\infty \leq \frac{1}{1 - \gamma}\| Q^*_{\mathcal{M}_j} - \mathcal{T}^\pi_{\mathcal{M}_j} Q^\pi_{\mathcal{M}_i} \|_\infty.
\]

Here,
\(  \mathcal{T}^\pi_{\mathcal{M}_j} Q^\pi_{\mathcal{M}_i} = R_{\mathcal{M}_j}(s^{+}, a) + \gamma \mathbb{E}_{{s' \sim T_{M_{j}}(s^{+}, a),a \sim I_{M_{j}}}(s^{+}, {s'}^{+})} [Q^\pi_{\mathcal{M}_i}({s'}^{+}, a')]. \)

According to Lemma \ref{lemma2},
\begin{align}
& \sup_{\, s_i^{+}, s_j^{+} \in \mathcal{S^{+}}, \, g(s_i) = g(s_j)}
\left\| T_{\mathcal{M}_i} (s_i^{+}, a) - T_{\mathcal{M}_j} (s_j^{+}, a) \right\|_1   \nonumber  \\&
\leq \sup_{\, s_i^{+}, s_j^{+} \in \mathcal{S^{+}}, \, g(s_i) = g(s_j)}
\left( \left\|  T_{\mathcal{M}_i} (s_i^{+}, a) - T_{\mathcal{M}_i} (s_j^{+}, a) \right\|_1
+ \left\|  T_{\mathcal{M}_i} (s_j^{+}, a) -  T_{\mathcal{M}_j} (s_j^{+}, a) \right\|_1 \right) \nonumber \\ &
\leq \sup_{\, s_i^{+}, s_j^{+} \in \mathcal{S^{+}}, \, g(s_i) = g(s_j)}
 \left\|  T_{\mathcal{M}_i} (s_i^{+}, a) - T_{\mathcal{M}_i} (s_j^{+}, a) \right\|_1
+ \sup_{\, s_i^{+}, s_j^{+} \in \mathcal{S^{+}}, \, g(s_i) = g(s_j)}\left\|  T_{\mathcal{M}_i} (s_j^{+}, a) -  T_{\mathcal{M}_j} (s_j^{+}, a) \right\|_1  \nonumber \\ &
= \epsilon_{T} +  \left\|  T_{\mathcal{M}_i} (s_j^{+}, a) -  T_{\mathcal{M}_j} (s_j^{+}, a) \right\|_1,
\end{align}

\begin{align}
& \sup_{\, s_i^{+}, s_j^{+} \in \mathcal{S^{+}}, \, g(s_i) = g(s_j)}
| R_{\mathcal{M}_i} (s_i^{+}, a) - R_{\mathcal{M}_j} (s_j^{+}, a) |   \nonumber  \\&
\leq \sup_{\, s_i^{+}, s_j^{+} \in \mathcal{S^{+}}, \, g(s_i) = g(s_j)}
\left( |  R_{\mathcal{M}_i} (s_i^{+}, a) - R_{\mathcal{M}_i} (s_j^{+}, a) |
+ |  R_{\mathcal{M}_i} (s_j^{+}, a) -  R_{\mathcal{M}_j} (s_j^{+}, a) | \right) \nonumber \\ &
\leq \sup_{\, s_i^{+}, s_j^{+} \in \mathcal{S^{+}}, \, g(s_i) = g(s_j)}
 |  R_{\mathcal{M}_i} (s_i^{+}, a) - R_{\mathcal{M}_i} (s_j^{+}, a) |
+ \sup_{\, s_i^{+}, s_j^{+} \in \mathcal{S^{+}}, \, g(s_i) = g(s_j)}|  R_{\mathcal{M}_i} (s_j^{+}, a) -  R_{\mathcal{M}_j} (s_j^{+}, a) |  \nonumber \\ &
= \epsilon_{R} +  |  R_{\mathcal{M}_i} (s_j^{+}, a) -  R_{\mathcal{M}_j} (s_j^{+}, a) |,
\end{align}

\begin{align}
& \sup_{\, s_i^{+}, s_j^{+} \in \mathcal{S^{+}}, \, g(s_i) = g(s_j)}
\left\| I_{\mathcal{M}_i} (s_i^{+}, s_i^{'+}) - I_{\mathcal{M}_j} (s_j^{+}, s_j^{'+}) \right\|_1   \nonumber  \\&
\leq \sup_{\, s_i^{+}, s_j^{+} \in \mathcal{S^{+}}, \, g(s_i) = g(s_j)}
\left( \left\|  I_{\mathcal{M}_i} (s_i^{+}, s_i^{'+}) - I_{\mathcal{M}_i} (s_j^{+}, s_j^{'+}) \right\|_1
+ \left\|  I_{\mathcal{M}_i} (s_j^{+}, s_j^{'+}) -  I_{\mathcal{M}_j} (s_j^{+}, s_j^{'+}) \right\|_1 \right) \nonumber \\ &
\leq \sup_{\, s_i^{+}, s_j^{+} \in \mathcal{S^{+}}, \, g(s_i) = g(s_j)}
 \left\|  I_{\mathcal{M}_i} (s_i^{+}, s_i^{'+}) - I_{\mathcal{M}_i} (s_j^{+}, s_j^{'+}) \right\|_1
+ \sup_{\, s_i^{+}, s_j^{+} \in \mathcal{S^{+}}, \, g(s_i) = g(s_j)}\left\|  I_{\mathcal{M}_i} (s_j^{+}, s_j^{'+}) -  I_{\mathcal{M}_j} (s_j^{+}, s_j^{'+}) \right\|_1  \nonumber \\ &
= \epsilon_{I} +  \left\|  I_{\mathcal{M}_i} (s_j^{+}, s_j^{'+}) -  I_{\mathcal{M}_j} (s_j^{+}, s_j^{'+}) \right\|_1,
\end{align}

\begin{align}
&\left\| T_{\mathcal{M}_i} (s_i^{+}, a) - T_{\mathcal{M}_j} (s_j^{+}, a) \right\|_1 + 
| R_{\mathcal{M}_i} (s_i^{+}, a) - R_{\mathcal{M}_j} (s_j^{+}, a) |+
\left\| I_{\mathcal{M}_i} (s_i^{+}, s_i^{'+}) - I_{\mathcal{M}_j} (s_j^{+}, s_j^{'+}) \right\|_1 \nonumber \\ &
\leq       \epsilon_{R}+\epsilon_{T}+\epsilon_{I}+\| z_i - z_j \|_1,
\end{align}
\begin{align*}
& \| Q^*_{\mathcal{M}_j} - \mathcal{T}^\pi_{\mathcal{M}_j} Q^\pi_{M_i} \|_\infty \\
&= \left| (\mathcal{T}_{\mathcal{M}_j} Q^*_{\mathcal{M}_j})(s_j^{'+}, a) - (\mathcal{T}[Q^*_{\mathcal{M}_j}]_{\mathcal{M}_i})(s_i^{'+}, a) \right| \\
&= \left| R_{\mathcal{M}_j}(s_j^{'+}, a) + \gamma \langle T_{\mathcal{M}_j}(s_j^{'+}, a), V^*_{\mathcal{M}_j} \rangle - R_{\mathcal{M}_i}(s_i^{'+}, a) - \gamma \langle T_{\mathcal{M}_i}(s_i^{'+}, a), [V^*_{\mathcal{M}_j}]_{\mathcal{M}_i} \rangle \right| \\
&\leq \epsilon_R + \gamma \left| \langle T_{\mathcal{M}_j}(s_j^{'+}, a), V^*_{\mathcal{M}_j} \rangle - \langle T_{\mathcal{M}_i}(s_i^{'+}, a), [V^*_{\mathcal{M}_j}]_{\mathcal{M}_i} \rangle \right| \\
& \leq \epsilon_R + \gamma \left| \langle T_{\mathcal{M}_j}(s_j^{'+}, a), V^*_{\mathcal{M}_j} \rangle - \langle T_{\mathcal{M}_i}(s_i^{'+}, a), [V^*_{\mathcal{M}_j}]_{\mathcal{M}_i} \rangle \right| +\gamma|  I_{\mathcal{M}_i} (s_j^{+}, s_j^{'+}) -  I_{\mathcal{M}_j} (s_j^{+}, s_j^{'+}) |        \\
&\leq \epsilon_R + \gamma (\epsilon_T+\epsilon_I+\| z_i - z_j \|_1) \|V^*_{\mathcal{M}_j} - \frac{R_{\max}}{2(1-\gamma)} \mathbf{1} \|_{\infty} \\
&\leq \epsilon_R + \gamma (\epsilon_T+\epsilon_I+\| z_i - z_j \|_1)\frac{R_{\max}}{2(1 - \gamma)}.
\end{align*}

We get the final bound:
\[
\left\| Q_{\mathcal{M}_j}^* - [Q_{\mathcal{M}_i}^\pi]_{\mathcal{M}_j} \right\|_\infty \leq \epsilon_R + \gamma \left( \epsilon_T + \epsilon_I + \| z_i - z_j \|_1 \right) \frac{R_{\text{max}}}{2(1 - \gamma)}.
\]

\end{proof}

\section{SimBelief Pseudo-code}

\label{appc}
\begin{algorithm}[H]
\caption{SimBelief algorithm}
\label{alg:algorithm}
\textbf{Input}:Task distribution $p(M)$\\
\textbf{Initialise}: context encoder $q_\phi$, real envs dynamics $p_\phi$, actor $\pi_\theta$, critic $Q_\omega$, replay buffer $\mathcal{D}$,
belief similarity learner $\psi_l$, latent dynamics $p_\theta$, state encoder $g$, distribution offset $(\Delta \mu,\Delta \sigma)$ 
\begin{algorithmic}[1]
\label{algo1}
\While {not done}
    \State Sample tasks $M_{train} = \{\mathcal{M}_i\}_{i=1}^N$ from $p(M)$
    \State Collect trajectories with $\pi_\theta$ and add to buffer $\mathcal{D}$ \Comment{Data collection}
    \For {step in SAC training steps} \Comment{Training step}
        \State Sample training tasks $M_{\text{train}}$ from $p(M)$
            \State Sample batches from $\mathcal{D}$ 
            \State Infer specific task beliefs \( z_r \sim \psi_r(b_r \mid h) q_\phi(h \mid \tau_{:t}) \)
            \State Infer latent task beliefs \( z_l \sim \psi_l(b_l \mid h) q_\phi(h \mid \tau_{:t}) \)
            \State Permute $z_l^i$ to get $z_l^j$
            \State $s_i^{+}=(g(s),z_i)$, $s_j^{+}=(g(s),z_j)$
            \State Update latent dynamics $p_\theta$ and state encoder $g$ using Eq. \ref{eq4}
            \State Update belief similarity learner $\psi_l$ using Eq. \ref{eq8}
            \State Update distribution offset $(\Delta \mu,\Delta \sigma)$ using Eq. \ref{eq9}
        \State Integrate $b_l^{i}$ with $b_r^{i}$ to obtain $b^i$
        \State Update $(\theta, \omega)$ with SAC algorithm \Comment{SAC update}
    \EndFor
    \For{step in VAE training steps}
    \State Sample $\tau_{:T} \sim \mathcal{B}$ with trajectory length $T$
    \State Decode only the past trajectories and update $q_\phi$, $\psi_r$ and $p_\phi$ using Eq. \ref{eq6}\Comment{VAE update}
    \EndFor
    
\EndWhile

\end{algorithmic}
\end{algorithm}



\section{Context-based Meta-RL Baselines}
\label{appbaseline}
In this section, we provide a detailed overview of the baseline methods compared in our experiments, highlighting their core methodologies and design principles.

\textbf{PEARL} \citep{rakelly2019efficient}
(Probabilistic Embeddings for Actor-Critic RL) is an off-policy meta-reinforcement learning algorithm designed to enhance both meta-training sample efficiency and rapid adaptation to new tasks. By disentangling task inference from control, PEARL employs a probabilistic latent context variable that enables structured exploration and efficient posterior sampling. This design allows the policy to reason about task uncertainty and adapt quickly in sparse reward or dynamic environments. Built upon the Soft Actor-Critic framework, PEARL achieves better sample efficiency compared to on-policy meta-RL methods. 

\textbf{MetaCURE} \citep{zhang2021metacure}
(Meta-RL with Efficient Uncertainty Reduction Exploration) is an off-policy meta-RL framework designed to address the challenges of sparse-reward environments. It explicitly separates exploration and exploitation by learning distinct policies for each, enhancing the efficiency of task inference and adaptation. The exploration policy is driven by an empowerment-based intrinsic reward that maximizes information gain about the task, enabling efficient collection of task-relevant experiences. MetaCURE leverages a shared probabilistic task inference mechanism, which improves sample efficiency by integrating exploration and exploitation processes. Compared to MetaCURE, SimBelief does not require knowledge of task IDs during the training phase. Instead, it learns the common structure of tasks to enable reasoning, resulting in stronger online adaptation capabilities.

\textbf{VariBAD} \citep{zintgraf2019varibad}
(Variational Bayes-Adaptive Deep RL) is a meta-reinforcement learning framework that approximates Bayes-optimal policies by leveraging variational inference and latent task embeddings. The algorithm learns a posterior belief over tasks using a variational auto-encoder and conditions its policy on this belief to balance exploration and exploitation in uncertain environments. VariBAD is notable for its ability to perform structured online exploration by integrating task uncertainty directly into action selection. Unlike traditional methods that rely on computationally intractable posterior sampling or explicit planning, VariBAD offers a tractable and flexible approach to Bayes-adaptive policies.

\textbf{HyperX} \citep{zintgraf2021exploration}
(Hyper-State Exploration) is a meta-reinforcement learning method that addresses sparse reward environments by leveraging exploration bonuses to meta-learn approximately Bayes-optimal task-adaptation strategies. It integrates two exploration bonuses during meta-training: (1) a bonus based on hyper-states (combining environment states and task beliefs) to encourage diverse task-exploration strategies, and (2) a reconstruction error bonus to incentivize the agent to collect data where task beliefs are inaccurate. By exploring hyper-states, HyperX efficiently gathers data for belief inference and optimally trades off exploration and exploitation in sparse or complex environments. 

$\textbf{RL}^\textbf{2}$ \citep{duan2016rl}
reformulates the reinforcement learning process as a meta-learning problem, embedding task-specific learning within the hidden state of a recurrent neural network (RNN). By leveraging a “slow” reinforcement learning algorithm to optimize the RNN's weights, $\text{RL}^2$ enables the network to store and process information about task dynamics across episodes. This design allows the agent to efficiently adapt to unseen tasks by utilizing its historical trajectory data, including observations, actions, rewards, and termination flags. $\text{RL}^2$ demonstrates competitive performance in solving multi-armed bandits and tabular MDPs, achieving results comparable to theoretically optimal algorithms. Additionally, its scalability to high-dimensional tasks, such as visual navigation in dynamic environments, highlights its potential as a versatile and efficient meta-RL framework.

\section{Environments and Implementation Details}
\label{appenv}

\begin{figure}[t]
\vspace{10pt}
\begin{center}
\includegraphics[width=5.1in]{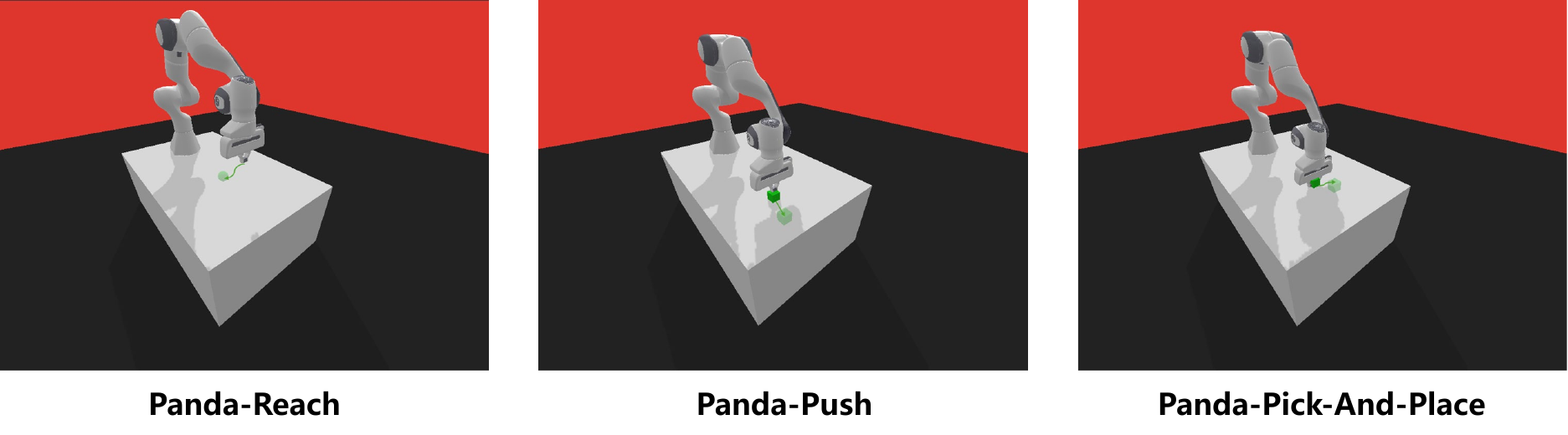}
\end{center} 
\vspace{-12pt}
\caption{Panda-gym tasks. The Panda-Reach task involves controlling the Panda robotic arm to move its gripper to a randomly generated target position within a defined workspace, the Panda-Push task requires pushing a cube from its initial position to a randomly generated target position on a table, and the Panda-Pick-And-Place task focuses on picking up a cube and placing it at a randomly generated target location above the table. }\label{pandagym}
\vspace{-10pt}
\end{figure}

\begin{table}[htbp]
\centering
\caption{Adaptation length and goal settings for environments used for evaluation}
\vspace{1em} 
\resizebox{\textwidth}{!}{
\begin{tabular}{lccccc}
\toprule[1pt]
\multirow{2}{*}{\textbf{Environment}}         & \textbf{\# of adaptation}  & \textbf{Max steps} & \multirow{2}{*}{\textbf{Goal type}} & \multirow{2}{*}{\textbf{Goal range}}         & \multirow{2}{*}{\textbf{Goal radius}} \\ 
 &\textbf{episodes} & \textbf{per episode} &  &  &  \\ 
\midrule[0.5pt]
Cheetah-Vel-Sparse   & 2    & 200  & Velocity       & {[}0,3{]}        & 0.5     \\ 
Point-Robot-Sparse    & 2    & 60     & Position & Semicircle with radius 1    & 0.3  \\ 
Walker-Rand-Params  & 2   & 200  & Velocity     & 1.5      & 0.5      \\ 
Panda-Reach   & 3       & 50   &  Position    & /        & 0.05           \\ 
Panda-Push  & 3       & 50    & Position        & /       & 0.05              \\ 
Panda-Pick-And-Place  & 3    & 50  & Position    & /     & 0.05            \\
\bottomrule[1pt]
\end{tabular}
}
\label{tab:env_settings}
\end{table}

Table \ref{tab:env_settings} summarizes key settings for the environments used in this work, including the number of adaptation episodes, the maximum number of steps allowed per episode, the type of goal (whether velocity or position-based), the goal range, and the goal radius. The specific environments are described as follows:

\textbf{Cheetah-Vel-Sparse.} ~
The Cheetah-Vel-Sparse environment involves controlling a half-cheetah robot to achieve and maintain a target velocity, which is randomly sampled from a uniform distribution over the range [0, 3]. At each time step, the robot receives a reward based on how closely its current velocity matches the target velocity, with a sparse reward system in place. Specifically, if the difference between the current velocity and the target velocity is within a specified tolerance (goal radius of 0.5), the robot receives a reward; otherwise, the reward is zero. Additionally, a control cost is applied to penalize large actions. The observation space includes the robot's joint positions, velocities, and body orientation, while the action space consists of motor torques controlling the robot’s movements. The sparse reward encourages the agent to achieve the target velocity while minimizing control effort.

\textbf{Point-Robot-Sparse.} ~
The Point-Robot-Sparse environment requires a point robot to navigate towards a goal randomly placed along a unit half-circle. The goal position is sampled at the beginning of each episode, and the robot receives rewards only when it reaches within a goal radius of 0.3 units from the target. The observation space consists of the robot’s current position, and the action space allows movement in both x and y directions. The reward system is sparse, meaning that rewards are only given when the robot is within the specified goal radius, encouraging the robot to explore and move efficiently towards the target. The sparse reward structure makes the task more challenging, as the robot receives feedback only when it approaches the target, requiring it to learn how to navigate effectively with limited guidance.

\textbf{Walker-Rand-Params.} ~
The Walker-Rand-Params environment involves controlling a bipedal walker robot, with the added complexity of randomized physical parameters such as body mass, leg strength, and joint properties. These parameters are randomized at the start of each episode, requiring the robot to adapt to various physical configurations in order to move forward. The observation space includes the walker’s joint angles, positions, and velocities, while the action space consists of motor commands applied to the walker’s joints. The reward system is primarily based on how closely the walker’s velocity matches a target velocity of 1.5 units per time step. If the deviation from the target velocity exceeds 0.5 units, the reward is set to 0. For smaller deviations, the reward is given as 0.8 - distance from the target velocity. Additionally, a small control cost proportional to the sum of the squared motor actions is subtracted from the reward to penalize large actions.

\textbf{Panda-Reach.} ~
The Panda-Reach task involves controlling the Panda robotic arm's gripper to reach a target position, randomly generated within a $30 \text{cm} \times 30 \text{cm} \times 30 \text{cm}$ workspace. The task is considered successful when the distance between the gripper and the target is less than 5 cm. The observation space for this task includes the position and velocity of the gripper (6 coordinates), augmented by two additional vectors representing the desired goal (target position) and the achieved goal (current gripper position). The action space comprises three movement coordinates (x, y, z) for controlling the gripper. The gripper remains closed throughout the task, and its movement along these axes determines the success of the task. A sparse reward function is used, with a reward of 0 if the target is reached, and -1 otherwise.

\textbf{Panda-Push.} ~
The Panda-Push task requires the robot to push a cube (side length of 4 cm) placed on a table to a target position. Both the target position and the initial position of the cube are randomly generated within a $20 \text{cm} \times 20 \text{cm}$ area around the neutral position of the robot. The gripper remains closed during the task, and the objective is to move the cube to within 5 cm of the target position. The observation space includes the gripper's position and velocity (6 coordinates), along with the cube's position, orientation, and velocity (12 coordinates). The action space consists of three movement coordinates (x, y, z) for controlling the gripper's movement. As in the Panda-Reach task, a sparse reward function is used, providing a reward of 0 if the task is completed successfully, and -1 otherwise.

\textbf{Panda-Pick-And-Place.} ~
The Panda-Pick-And-Place task involves the robot picking up a cube (side length of 4 cm) and placing it at a target location, which is randomly generated within a $20 \text{cm} \times 20 \text{cm} \times 10 \text{cm}$ volume above the table. The task is completed when the cube is placed within 5 cm of the target position. The observation space for this task includes the gripper’s position and velocity (6 coordinates), the cube’s position, orientation, and velocity (12 coordinates), and the opening state of the gripper (1 coordinate). The action space is expanded to include three movement coordinates (x, y, z) for controlling the gripper and an additional coordinate for controlling the gripper’s opening and closing. A sparse reward function is employed, rewarding the robot with 0 for completing the task and -1 otherwise (Figure \ref{pandagym}).

All experiments were conducted using an Nvidia  RTX 4090 GPU, the source code is available at: \url{https://github.com/mlzhang-pr/SimBelief}.

\begin{table}[htbp]
\centering
\caption{Hyperparameter settings for SimBelief in different environments}
\vspace{1em} 
\resizebox{\textwidth}{!}{
\begin{tabular}{lcccc}
\toprule[1pt]
\textbf{Parameter} & \textbf{Cheetah-Vel-Sparse}  & \multirow{2}{*}{\textbf{Point-Robot-Sparse}} & \multirow{2}{*}{\textbf{Panda-Reach}} & \textbf{Panda-Push} \\ 
\textbf{Name}  &\textbf{Walker-Rand-Params} &   &  & \textbf{Panda-Pick-And-Place} \\ 
\midrule[0.5pt]
Number of Tasks    & 120         & 100       & 100     & 60  \\ 
Number of Training Tasks      & 100     & 80      & 80   &  50    \\ 
Number of Evaluation Tasks    & 20       & 20       & 20         & 10      \\ 
Number of Episodes  & 2  &  2  & 3  & 3  \\ 
Number of Iterations   & 1000       & 2000   &  1000    & 4000         \\
RL Updates per Iteration  & 2000    & 1000    & 1000        & 1000            \\ 
Batch Size  & 256    & 256   & 256    & 256           \\ 
Policy Buffer Size & 1e6   &  1e6   & 1e6    & 1e6 \\
VAE Buffer Size   &  1e5   &   5e4  &  5e4   &  5e4   \\
Policy Layers  & [128, 128, 128]    & [128, 128]  & [128, 128]    & [128, 128, 128]           \\ 
Actor Learning Rate  & 0.0003    & 0.00007  & 0.00007    & 0.00007           \\ 
Critic Learning Rate  & 0.0003    & 0.00007  & 0.00007    & 0.00007           \\ 
Discount Factor ($\gamma$)  & 0.99    & 0.9  & 0.9    & 0.9           \\ 
Entropy Alpha  & 0.2    & 0.01  & 0.01    & 0.01           \\ 
VAE Updates per Iteration  & 20    & 25  & 25    & 25           \\ 
VAE Learning Rate  & 0.0003    & 0.001  & 0.001    & 0.001           \\ 
KL Weight  & 1.0    & 0.1  & 0.1    & 0.1           \\ 
Task Embedding Size  & 10    & 10  & 5    & 5           \\
\bottomrule[1pt]

\end{tabular}
}

\label{tab:hyperparameter settings}
\end{table}

\section{The Mechanism of SimBelief for Rapid Adaptation}
\label{mech}
In this section, we will explain, from the perspective of task belief representation, why SimBelief can quickly converge to a near Bayes-optimal policy during the training phase and how it enables adaptation to OOD tasks within a single episode in sparse reward environments.

\subsection{The Evolution of Task Belief During the Training Phase}


During the training phase, the agent needs to explore complex environments to acquire relevant information about the current task, which is represented by the specific task belief \( b_r \). Other meta-RL algorithms are limited to task-specific information while ignoring the similarities between tasks. SimBelief's latent dynamics can capture the unique structure shared among tasks and distinguish these structures through the latent task belief \( b_l \). Specifically, during the learning process, SimBelief identifies the distribution of similar tasks via \( b_l \), and then uses \( b_r \) to fit the agent to the exact specific task distribution, as illustrated in Figure \ref{fig1}. In Figure \ref{distshift}, we visualize the specific task belief, latent task belief, and their mixed Gaussian distribution. During the early and middle stages of training, the agent quickly identifies the distribution of the task being executed. After convergence, it can accurately fit the specific task distribution.

\begin{figure}[t]
\vspace{10pt}
\begin{center}
\includegraphics[width=5.1in]{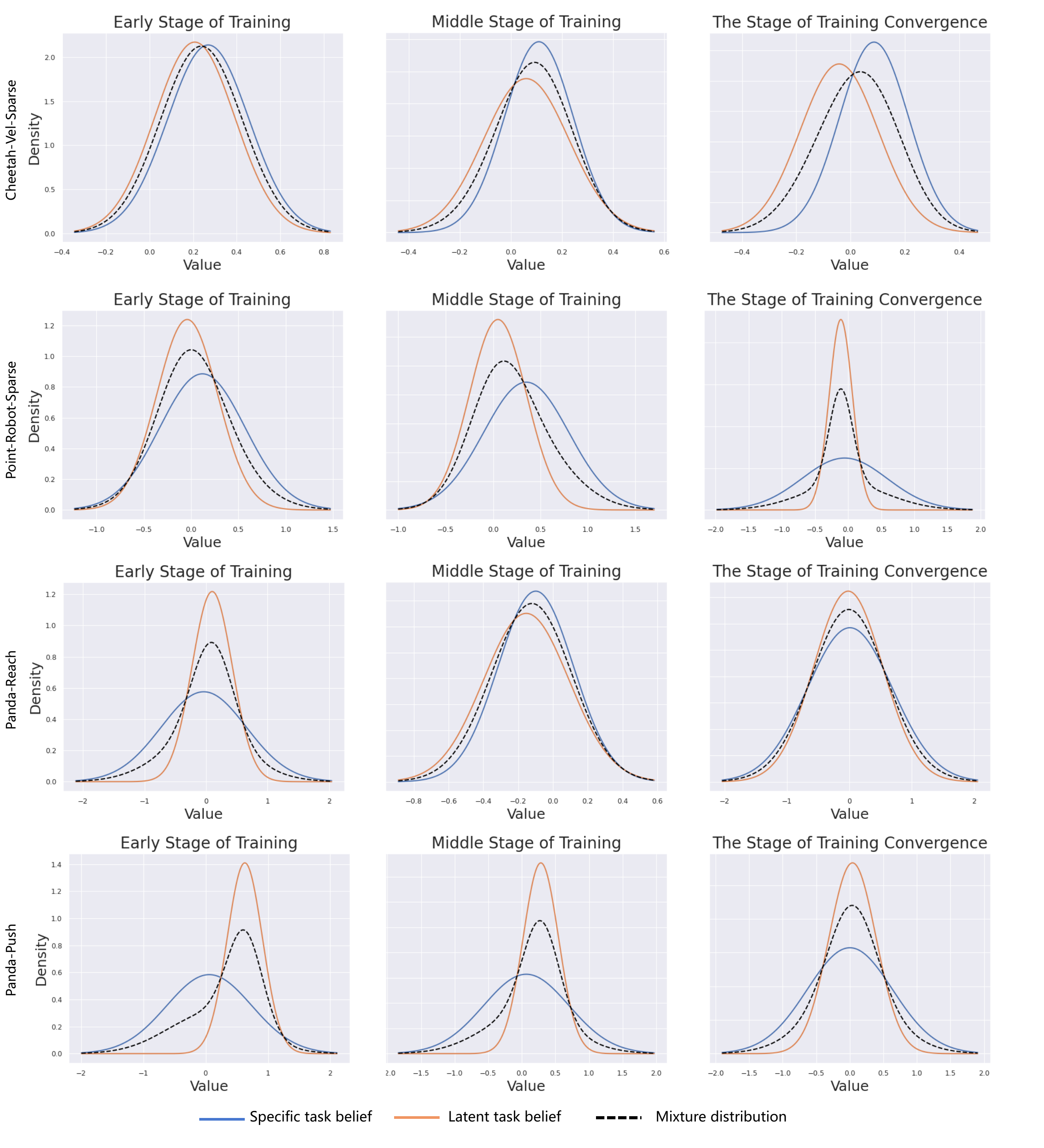}
\end{center} 
\vspace{-12pt}
\caption{Visualization of different task beliefs during the training phase.}\label{distshift}
\vspace{-10pt}
\end{figure}

\subsection{The Correlation of Beliefs Between Different Tasks During the Adaptation Phase}
\label{appcore}


We visualize the correlations of task beliefs across different tasks in Point-Robot-Sparse, Panda-Reach, Panda-Push, and Panda-Pick-Place environments. In each environment, 20 tasks are randomly generated, and the SimBelief agent’s rollouts are used to extract the specific task beliefs and latent task beliefs. The cosine similarity between task beliefs across tasks is calculated, and the correlation matrices are visualized. As shown in Figure 9, specific task belief primarily captures the local information of task distributions (learning weaker correlations between tasks), while latent task belief emphasizes global information (capturing stronger inter-task correlations). This illustrates the principle behind SimBelief’s ability to achieve rapid adaptation to OOD tasks in sparse reward environments: \textit{latent task belief enhances the agent’s reasoning efficiency across tasks and improves the efficiency of knowledge transfer.} Theorem 2 demonstrates the transferability of tasks in the latent space.

\begin{figure}
\begin{center}
\includegraphics[width=0.8\linewidth]{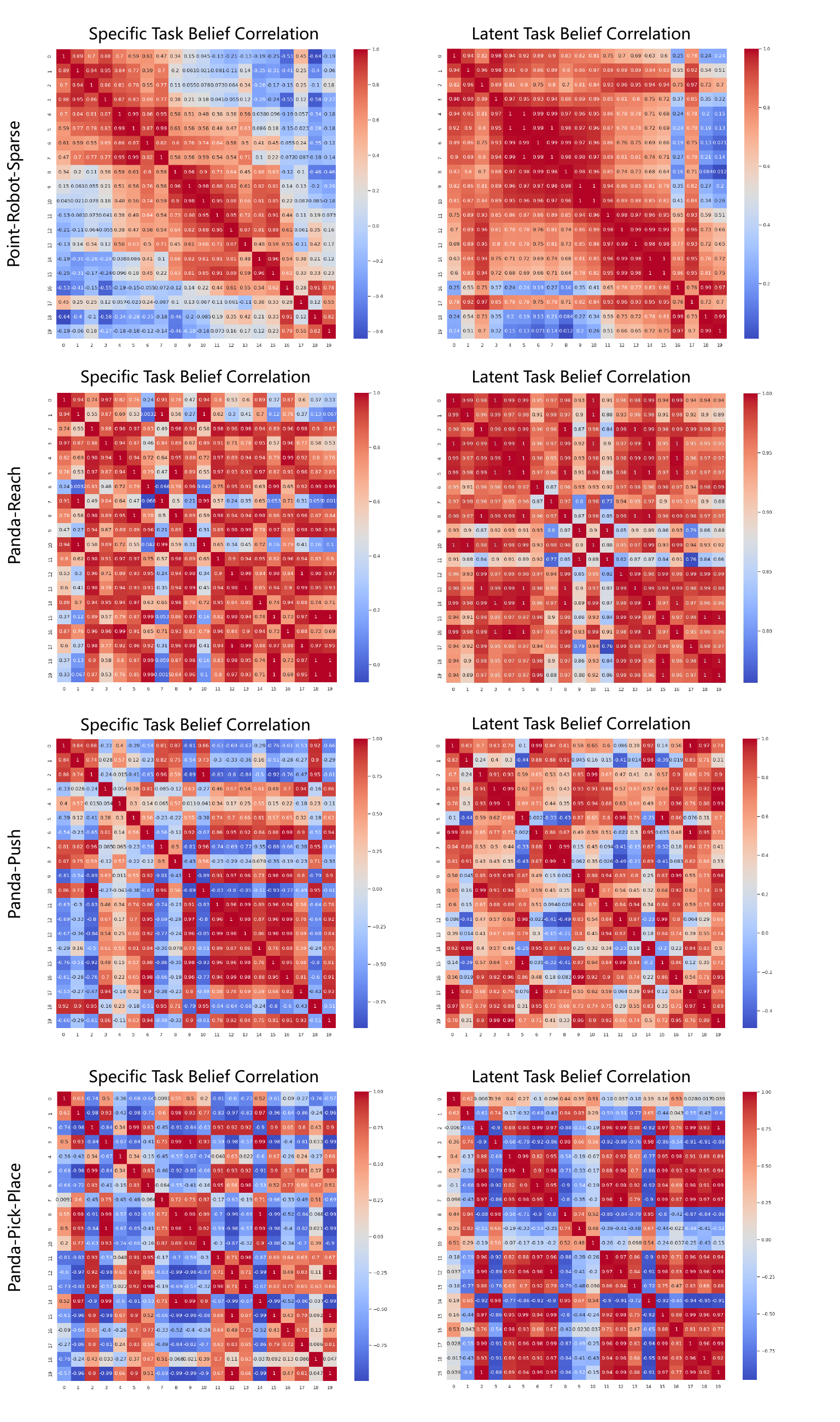}
\end{center} 
\caption{The correlation matrix of task beliefs across different tasks. Specific task belief focuses more on local information between tasks, while latent task belief captures the global characteristics of task distributions.}\label{correlation3}
\end{figure}

\section{Ablation Study}

\subsection{Latent Dynamics Ablation}
To validate the effectiveness of our algorithm using the learned latent task belief, we froze the overall latent dynamics and, similar to VariBAD \citep{zintgraf2019varibad}, only used an external VAE for environment reconstruction. Our algorithm demonstrated significant improvements across all six tested tasks, with particularly notable gains in the more exploration-demanding tasks, Panda-Push and Panda-Pick-And-Place (Figure \ref{figablation}).
\begin{figure}[t]
\vspace{10pt}
\begin{center}
\includegraphics[width=5.1in]{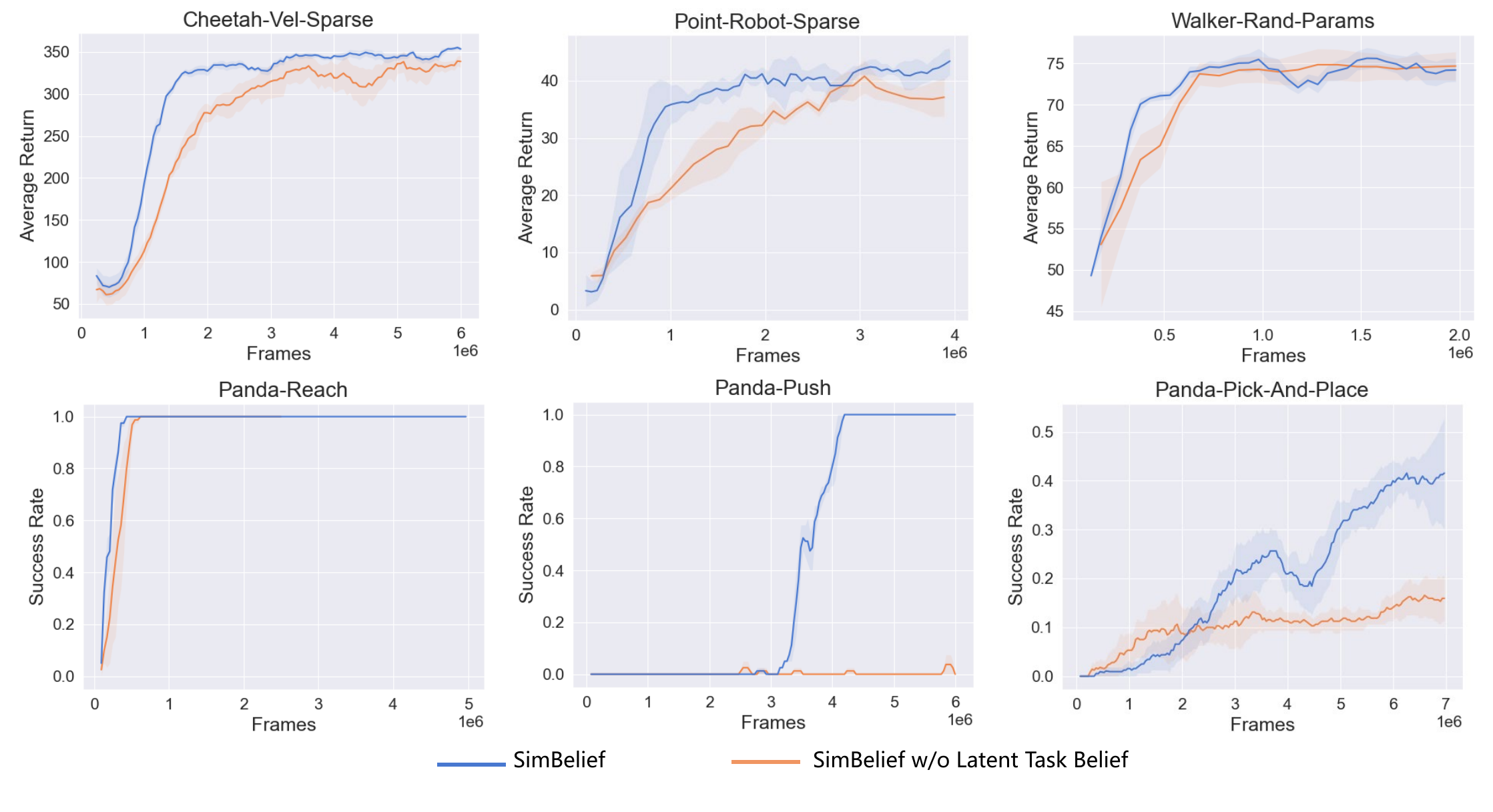}
\end{center} 
\vspace{-12pt}
\caption{Ablation study on SimBlief’s latent task belief.}\label{figablation}
\vspace{-10pt}
\end{figure}

\subsection{The Impact of Inverse Dynamics on Adaptation Ability}
\label{invab}
To demonstrate the effectiveness of the inverse dynamics module in facilitating rapid learning of task similarity, we conducted ablation experiments by removing the inverse dynamics module from the latent space on Cheetah-Vel-Sparse (velocity range [4.0,5.0]) and Point-Robot-Sparse (radius=1.2) tasks. The results show that the inverse dynamics module plays a critical role in enabling the agent's reasoning and generalization on OOD tasks (Figure \ref{figivsablation}).
\begin{figure}[t]
\vspace{10pt}
\begin{center}
\includegraphics[width=4.3in]{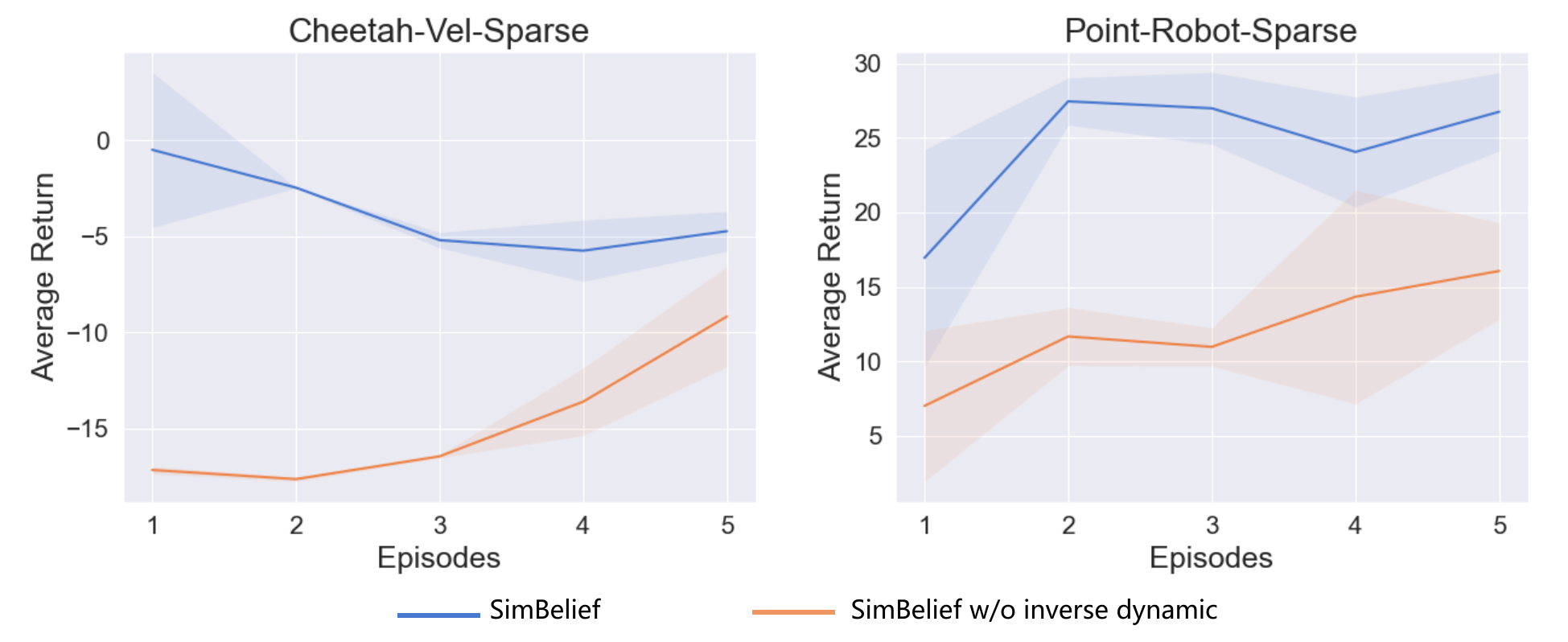}
\end{center} 
\vspace{-12pt}
\caption{Ablation study on SimBlief’s inverse dynamic module in latent space.}\label{figivsablation}
\vspace{-10pt}
\end{figure}

\subsection{Ablation Study on Wieghts}
\label{appw}
When combining \( b_l \) and \( b_r \), we apply a mixture of Gaussians to the two beliefs. To evaluate the impact of different weights, we compared three weighting configurations on the Cheetah-Vel-Sparse task: \([w_r, w_l] = [0.5, 0.5]\), \([w_r, w_l] = [0.25, 0.75]\), and \([w_r, w_l] = [0.75, 0.25]\). Among these, \([w_r, w_l] = [0.5, 0.5]\) achieved higher exploration efficiency during training and demonstrated stronger adaptability to OOD tasks (Figure \ref{figWablation}).

\begin{figure}[t]
\vspace{10pt}
\begin{center}
\includegraphics[width=5.5in]{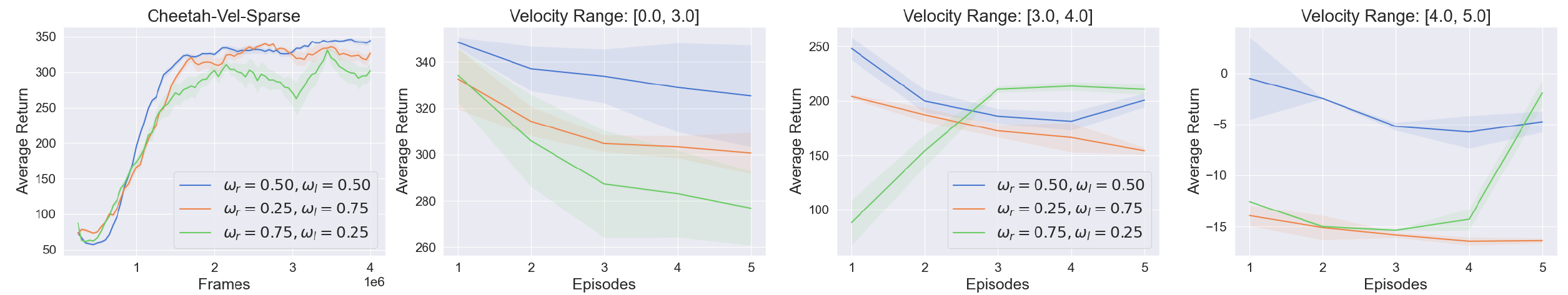}
\end{center} 
\vspace{-12pt}
\caption{The performance of different Gaussian mixture weights on the Cheetah-Vel-Sparse task.}\label{figWablation}
\vspace{-10pt}
\end{figure}

\subsection{Ablation Study on Offsets}
\label{offsetablation}
During the training phase, we use the offset \((\Delta \mu, \Delta \sigma)\) applied to the latent task belief primarily to improve the stability of the algorithm's convergence in continuous control tasks (e.g., Cheetah-Vel-Sparse). However, this does not have a significant impact on the overall performance of the algorithm. Even without using the offset during training, SimBelief still outperforms other baselines (Figure \ref{figoffsetablation}).


\begin{figure}[t]
\vspace{10pt}
\begin{center}
\includegraphics[width=5.1in]{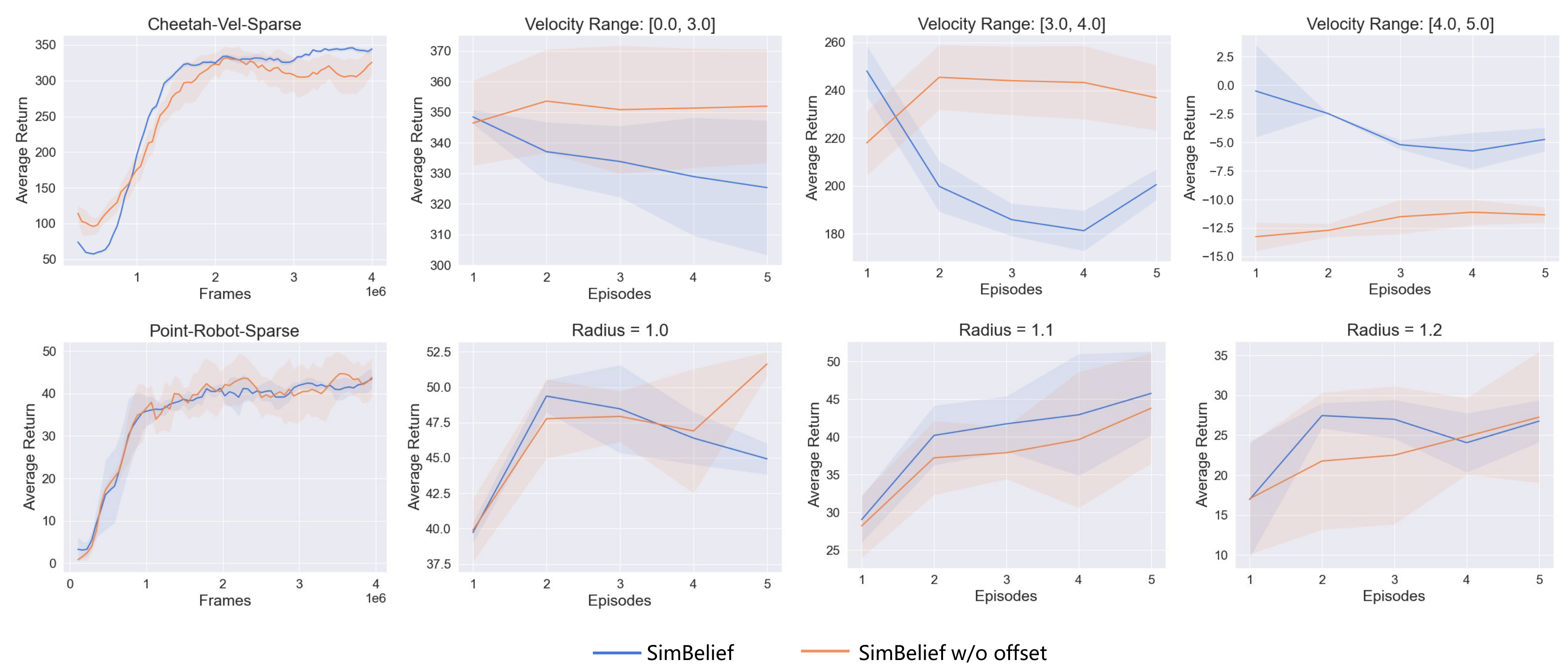}
\end{center} 
\vspace{-12pt}
\caption{Ablation study on offset $(\Delta \mu,\Delta \sigma)$.}\label{figoffsetablation}
\vspace{-10pt}
\end{figure}

\section{Additional Visualizations}

In this section, we provide the task belief representations of SimBelief and other baselines on the Walker-Rand-Param tasks (Figure \ref{figh1}), the adaptation performance on in-distribution tasks Panda-Pick-And-Place and Walker-Rand-Params (Figure \ref{figh2}), and the exploration performance on the in-distribution Point-Robot-Sparse task (Figure \ref{figh3}).


\begin{figure}[t]
\vspace{10pt}
\begin{center}
\includegraphics[width=5.1in]{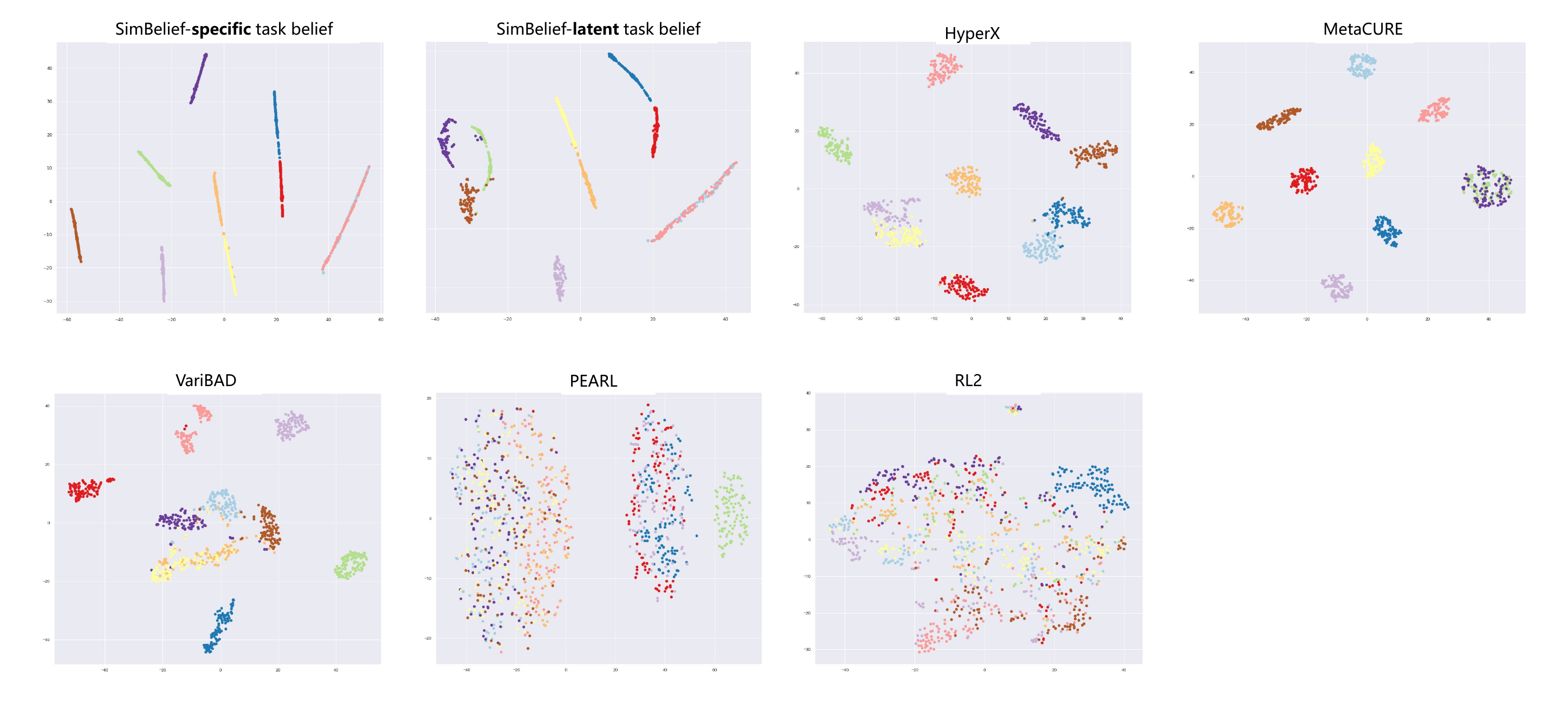}
\end{center} 
\vspace{-12pt}
\caption{The task belief representation of all experimental algorithms for 10 random in-distribution  Walker-Rand-Param tasks.}\label{figh1}
\vspace{-10pt}
\end{figure}

\begin{figure}[t]
\vspace{10pt}
\begin{center}
\label{figadap}
\includegraphics[width=4.5in]{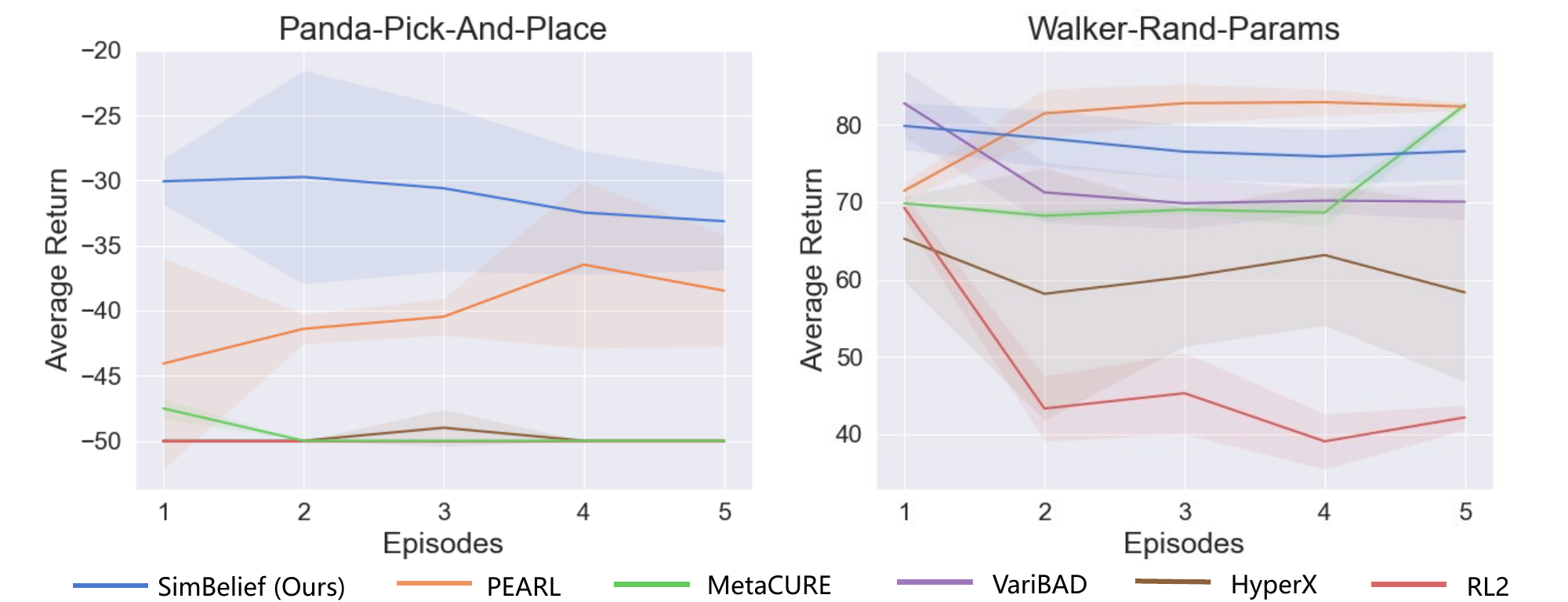}
\end{center} 
\vspace{-12pt}
\caption{Average test performance for the first 5 rollouts on randomly generated tasks of Panda-Pick-And-Place and Walker-Rand-Params. }\label{figh2}
\vspace{-10pt}
\end{figure}

\begin{figure}[t]
\vspace{10pt}
\begin{center}
\label{figinprs}
\includegraphics[width=5.1in]{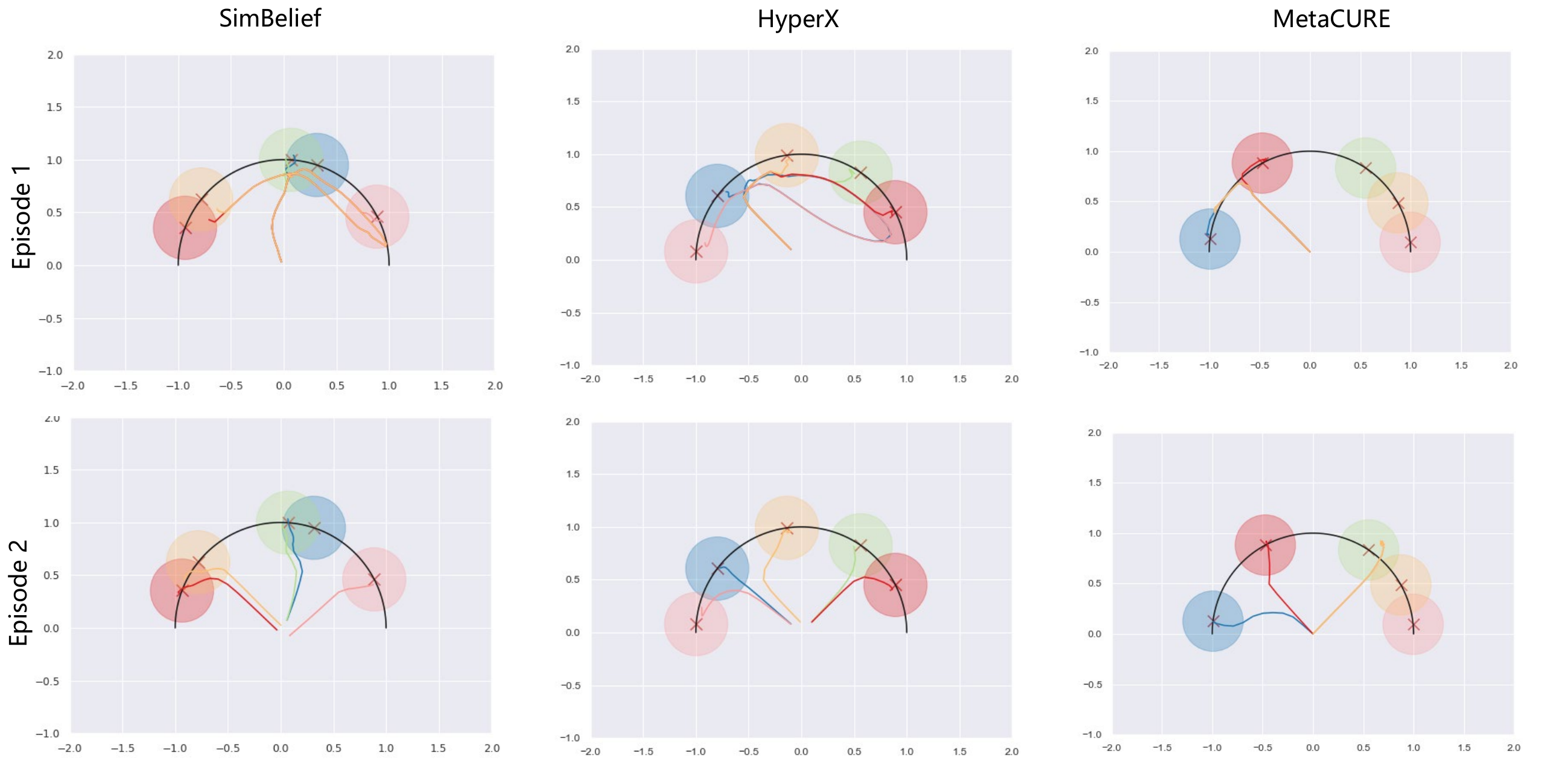}
\end{center} 
\vspace{-12pt}
\caption{Exploration and adaptation performance in the 5 random in-distribution (radius = 1.0)
tasks of Point-Robot-Sparse. }\label{figh3}
\vspace{-10pt}
\end{figure}

\end{document}